\newtheorem{theorem}{Theorem}
\crefname{theorem}{Theorem}{Theorems}
\Crefname{theorem}{Theorem}{Theorems}
\newtheorem{theoremcorollary}{Corollary}[theorem]
\crefname{theoremcorollary}{Corollary}{Corollaries}
\Crefname{theoremcorollary}{Corollary}{Corollaries}
\newtheorem{definition}{Definition}
\crefname{definition}{Definition}{Definitions}
\Crefname{definition}{Definition}{Definitions}
\newtheorem{lemma}{Lemma}
\crefname{lemmacorollary}{Lemma}{Lemmas}
\Crefname{lemmacorollary}{Lemma}{Lemmas}
\newtheorem{lemmacorollary}{Lemma}[lemma]
\crefname{lemma}{Lemma}{Lemmas}
\Crefname{lemma}{Lemma}{Lemmas}
\theoremstyle{definition}
\crefname{example}{Example}{Examples}
\Crefname{example}{Example}{Examples}
\icmltitlerunning{Optimal Model Averaging: Towards Personalized Collaborative Learning}
\begin{document}

\twocolumn[
\icmltitle{Optimal Model~Averaging:\\ Towards Personalized Collaborative~Learning}
\icmlsetsymbol{equal}{*}

\begin{icmlauthorlist}
\icmlauthor{Felix Grimberg}{EPFL}
\icmlauthor{Mary-Anne Hartley}{EPFL}
\icmlauthor{Sai P.~Karimireddy}{EPFL}
\icmlauthor{Martin Jaggi}{EPFL}
\end{icmlauthorlist}

\icmlaffiliation{EPFL}{School of Computer and Communication Sciences, EPFL, Lausanne, Switzerland}

\icmlcorrespondingauthor{Felix Grimberg}{felix.grimberg@hotmail.com}

\icmlkeywords{Federated Learning, Model Personalization, Personalized Federated Learning, Model Interpolation, Model Averaging, Bias-Variance-Trade-Off, Mean Estimation, Weighted Model Averaging, Model Mixing, Data Interpolation}
\vskip 0.3in
]

\printAffiliationsAndNotice{}

\begin{abstract}
In federated learning, differences in the data or objectives between the participating nodes motivate approaches to train a personalized machine learning model for each node.
One such approach is weighted averaging between a locally trained model and the global model.
In this theoretical work, we study weighted model averaging for arbitrary scalar mean estimation problems under minimal assumptions on the distributions.
In a variant of the bias-variance trade-off, we find that there is always some positive amount of model averaging that reduces the expected squared error compared to the local model, provided only that the local model has a non-zero variance.
Further, we quantify the (possibly negative) benefit of weighted model averaging as a function of the weight used and the optimal weight.
Taken together, this work formalizes an approach to quantify the value of personalization in collaborative learning and provides a framework for future research to test the findings in multivariate parameter estimation and under a range of assumptions.
\end{abstract}

\newcommand{\guillemots}[1]{\guillemotleft{} #1 \guillemotright{}}
\newcommand{\brackets}[1]{\left( #1 \right)}
\newcommand{\squareBrackets}[1]{\left[ #1 \right]}
\newcommand{\curlyBrackets}[1]{\left\{ #1 \right\}}
\newcommand{\mathify}[1]{$#1$}

\newcommand{\ALPHA}{\mathify{\alpha}}
\newcommand{\alphaopt}{\alpha^\star}
\newcommand{\ALPHAOPT}{\mathify{\alphaopt}}
\newcommand{\localmean}[1][X]{\bar #1 }
\newcommand{\LOCALMEAN}[1][X]{\mathify{\localmean[#1]}}
\newcommand{\wavg}[1][\alpha]{\bar \mu_{#1}}
\newcommand{\WAVG}[1][\alpha]{\mathify{\wavg[#1]}}
\newcommand{\OPTAVG}{\WAVG[\alphaopt]}
\newcommand{\LOCALMEANs}{\LOCALMEAN{}  and \LOCALMEAN[Y]}
\newcommand{\samples}[1][X]{\curlyBrackets{\MakeLowercase{#1}_1, \dots, \MakeLowercase{#1}_{n_{#1}} }}
\newcommand{\SAMPLES}[1][X]{\mathify{\samples[#1]}}
\newcommand{\sampling}[1][X]{\samples[#1] \overset{i.i.d.}{\sim} #1}
\newcommand{\SAMPLING}[1][X]{\mathify{\sampling[#1]}}
\newcommand{\SAMPLINGXY}{Let \SAMPLING{}, \SAMPLING[Y]}
\newcommand{\truemean}[1][X]{\mu_{#1}}
\newcommand{\TRUEMEAN}[1][X]{\mathify{\truemean[#1]}}

\newcommand{\expectation}[1][X]{\mathbb{E}\!\left[ #1 \right]}
\newcommand{\bias}{\squarediff{\truemean[Y] }{\truemean }} 
\newcommand{\BIAS}{\mathify{\bias}}
\newcommand{\variance}[1][X]{\operatorname{Var} \!\left[ #1 \right] }
\newcommand{\VARIANCE}[1][X]{\mathify{\variance[#1]}}
\newcommand{\varshort}[1][X]{\sigma_{#1}^2}
\newcommand{\VARSHORT}[1][X]{\mathify{\varshort[#1]}}
\newcommand{\squarediff}[2]{\brackets{ #1 - #2 }^2}
\newcommand{\eseformula}[1][\wavg]{\expectation[\squarediff{#1}{\truemean}] }
\newcommand{\ese}[1][\alpha]{e_{#1}}
\newcommand{\ESE}[1][\alpha]{\mathify{\ese[#1]}}
\newcommand{\optese}{\ese[\alphaopt]}
\newcommand{\OPTESE}{\ESE[\alphaopt]}
\newcommand{\ESETEXTFULL}{expected squared error}
\newcommand{\ESETEXT}{ESE}
\newcommand{\dd}[1][\alpha]{\frac{\delta}{\delta #1}}
\newcommand{\ddd}[2][\alpha]{\frac{\delta^{#2}}{\delta {#1}^{#2}}}
\newcommand{\mathTangent}[1][2\alphaopt]{t_{#1}}
\newcommand{\tangent}[1][2\alphaopt]{\mathify{\mathTangent[#1]}}

\newcommand{\scaledVar}[1][X]{\frac{\sigma_{#1}^2}{n_#1}}
\newcommand{\tscaledVar}[1][X]{\sigma_{#1}^2/n_#1}
\newcommand{\scaledVarsquared}[1][X]{\left(\scaledVar[#1]\right)^2}
\newcommand{\tscaledVarsquared}[1][X]{\left(\tscaledVar[#1]\right)^2}
\newcommand{\conditions}{\max \curlyBrackets{\squarediff{\truemean[Y]}{\truemean}, \varshort, \varshort[Y] } > 0}
\newcommand{\CONDITIONS}{\mathify{\conditions}}
\newcommand{\DISTRIBUTIONS}{$X$ and $Y$}
\newcommand{\MEANSANDVARS}{means and variances of \DISTRIBUTIONS}
\newcommand{\THEMEANSANDVARS}{the \MEANSANDVARS}
\newcommand{\subfigcaption}[1]{{$n_Y = #1 \frac{\varshort[Y]}{\varshort} n_X$}}

% % % % % % % % % % % %
\section{Introduction}
\label{sec:intro}
% % % % % % % % % % % %

\emph{Collaborative learning} refers to the setting where several agents, each having access to their own personal training data, collaborate in hopes of learning a better model. In most real applications, the data distributions are different for each agent. Each agent therefore faces a key decision, in order to obtain the highest quality model: Should they ignore their peers and simply train a model on their local data alone---or should they collaborate to potentially benefit from additional training data from other agents?

This fundamental question is at the center of two areas that are currently of high interest  in both research and industry applications, namely model personalization as well as  federated and decentralized  learning.

\vspace{-2mm}
\paragraph{Federated Learning.}
In federated learning (FL) \cite{konecny2016federated_first}, training data is distributed over several agents or locations.
For instance, these could be several hospitals collaborating on a clinical trial, or billions of mobile phones involved in training a next-word prediction model for a virtual keyboard application.
The purpose of FL is to train a global model on the union of all agents' individual data. The training is coordinated by a central server, while the agents' local data never leaves its device of origin.
Owing to the data locality, FL
has become the most prominent collaborative learning approach in recent years towards privacy-preserving machine learning \cite{kairouz_advances_FL_2019,FL_overview_2020}.
\emph{Decentralized learning} refers to the analogous setting without a central server, where agents communicate peer-to-peer during training, see e.g. \cite{nedic2020distributed}.

\vspace{-2mm}
\paragraph{Personalization.}
When each agent has a different data distribution\footnote{
    See \citet[Section 3.1]{kairouz_advances_FL_2019} for common types of violations of independence and identical distribution in FL.
} (and thus a different learning task), a ``one model fits all'' approach leads to poor accuracy on individual agents. Instead, a given global model (such as from FL) needs to be personalized, e.g., by additional training on the local data of our agent. 
Prominent approaches to address this important problem of statistical heterogeneity are additional local training or fine-tuning \cite{local_fine_tuning_2019}, or weighted averaging between a global model and a locally trained model, during or after training \cite{mansour_three_approaches_personalized_FL_2020,Adaptive_Personalized_FL_2020}.

\vspace{-2mm}
\paragraph{The Question.} 
In this paper, we address weighted averaging of several models, asking: \emph{When can a weighted model average outperform an individual agent's local model? How much can we gain by using it? How much can we lose?} We aim to answer this while making only minimal assumptions on the data distributions.

In the context of FL, an answer to these questions serves the users to determine under what conditions FL should be preferred to independently training a local model. It is also of interest to the FL server to identify and potentially reward participants with high contributions to the model quality.
In decentralized learning, it can be used by agents to select most compatible peers during training \cite{grimberg2020weight}.
The question also naturally extends to model personalization, as in \citet{Adaptive_Personalized_FL_2020} who ask \emph{``when is personalization better?"} and \emph{``what degree of personalization is best?"} in the different context of model interpolation. 

\vspace{-2mm}
\paragraph{Contributions.} 
In this work, we analyse the linear combination of two models for arbitrary scalar mean estimation problems.
Given a local empirical mean~\LOCALMEAN{} and some other empirical mean \LOCALMEAN[Y],
we ask whether the
\emph{weighted model average} $\brackets{1-\alpha} \localmean + \alpha \localmean[Y]$
is a better estimator of the \emph{local true mean} $\expectation[\localmean]$, than \LOCALMEAN{} itself.
For instance,~\LOCALMEAN[Y] could be a global model obtained through federated learning without the training data of \LOCALMEAN.

We calculate the error 
of the weighted model average  with respect to the local true mean, taking the expectation over~\LOCALMEANs.
We find the optimal weight \ALPHAOPT{} to minimize this error,
showing that the error of the optimally weighted average is reduced by a fraction \ALPHAOPT{} (equal to the weight itself), compared to the error of \LOCALMEAN.

In a variant of the bias-variance trade-off, we find that there is always some positive amount $\alphaopt > 0$ of model averaging that reduces the error compared to the local model \LOCALMEAN{} (provided that \LOCALMEAN{} has a non-zero variance).
Recognizing that the optimal weight depends on quantities that are likely unknown to the experimenter,
we quantify the error of a sub-optimally weighted model average with weight \ALPHA.
We show that the error is better than the local model's if $\alpha < 2\alphaopt$, and that even a small weight $\alpha < \alphaopt$ can lead to a relatively large improvement.
On the other hand, the error is worse than the local model's if $\alpha > 2\alphaopt$.
Thus, choosing an exceedingly large weight can be harmful in situations where $2\alphaopt < 1$.
This could easily happen in practice, if \ALPHA{} is chosen based on the observed data in the presence of a large sampling bias.

We introduce our model in \cref{sec:model and assumptions}, showing how it relates to practical use cases and reflecting on the assumptions underlying our results.
We prove our main results in \cref{sec:theoretical results}
and visualize and discuss them in \cref{sec:Discussion}. Here, we also interpret weighted model averaging as a general form of shrinkage and explain how our results compare to recent related work.
We conclude with a brief summary of our results and pointers to open problems in \cref{sec:Conclusion}.

\section{Related Work}
\label{subsec:related works}

\paragraph{Model Personalization.}
A properly identified source of heterogeneity between the agents
can sometimes be addressed
by incorporating context information as additional features in the ML model: location, time, user demographic, etc.
Several other solutions have been proposed to tackle unidentified sources of heterogeneity:
\vspace{-2mm}
\begin{description}[itemsep=1pt,wide]
    \item[Local fine-tuning,]
        wherein the fully trained global model is adapted to the local context by taking a number of local stochastic gradient descent (SGD) steps \cite{local_fine_tuning_2019}.
    \item[Multi-task learning and clustering,]
        to group agents and train one model per cluster.
        While agents can often be clustered
        by their context or geolocalisation, \citet{mansour_three_approaches_personalized_FL_2020} propose an expectation-maximization-type algorithm to jointly optimize the models and the clustering.
    \item[Gradient-based personalization methods,]
        that learn a personalized model by using variants of gradient similarity measures to adapt the weight of local and global models \cite{grimberg2020weight,fallah2020personalized}. 
    \item[Joint local and global learning,]
        where a global model and a local model are trained simultaneously \cite{Adaptive_Personalized_FL_2020,mansour_three_approaches_personalized_FL_2020,fallah2020personalized}, for example using model-agnostic meta-learning \cite{kairouz_advances_FL_2019}.
    \item[Merging already trained models] is also viable using approaches such as model fusion \cite{singh2020model} or distillation \cite{lin2020ensemble}, however these again come without theoretical guarantees on the quality of the resulting merged model.
\end{description}
\vspace{-2mm}
\paragraph{Theoretical Analysis of Weighted Model Averaging.}
\citet{Donahue2020stable_coalitions} investigate FL from a game-theory point of view, to test whether self-interested players (i.e., agents) have an incentive to join an FL task.
They analyze the formation of clusters \emph{(stable coalitions)} of players in a linear regression task with specific assumptions on the data generation process. Once allowing each player to use a weighted average of their local model with their cluster's global model, the \emph{grand coalition} (i.e. a federation of all agents) is weakly preferred by all players over their local model.
Further, if players can weight each other player's local model individually rather than selecting a single weight for their cluster's model, the \emph{grand coalition} becomes core stable, where no other coalition $C$ is preferred over the \emph{grand coalition} by all players in $C$.

Thus, weighted model averaging seems more promising than clustering-based approaches in the setting under consideration.
We expand on this analysis of weighted model averaging, proving that the results about the optimal model averaging weight hold even under minimal assumptions on the data generation process.
is based on unrealistic assumptions of identical variance and similar means across players.
In contrast, we allow arbitrary finite means and variances, only requiring a positive variance for the local model.
Rather, our analysis is limited to a one-dimensional mean estimation problem with two players ($D=1, \, M=2$).
As shown in \cref{subsec:equivalence with Donahue}, our results are equivalent to theirs when our respective assumptions are applied jointly.

\paragraph{The Bias-Variance Trade-Off.}
Weighted model averaging is a form of bias-variance trade-off:
It aims to leverage the vast quantity of data on the network to reduce the model's generalization error, at the expense of increased bias if the local and global distributions match poorly.
Theoretical research on the bias-variance trade-off dates back to the surprising results of \citet{james1961}, who constructed a biased empirical estimator that provably dominates the maximum likelihood estimator for a specific higher-dimensional mean estimation problem (cf. \cref{subsec:relation to James-Stein estimator}).
Today, we would call this a \emph{shrinkage} estimator.

Despite their long history, 
shrinkage estimators are still relevant and can be found in recent applications.
For instance, \citet{2020shrinkage} recently investigated how to shrink importance weights to reduce the mean squared error of doubly-robust estimators in off-policy evaluation.
In \cref{subsec:relation to James-Stein estimator}, we show that our analysis of weighted model averaging includes shrinkage as a special case, when \LOCALMEAN[Y] is set to a constant value, such as $0$.

% % % % % % % % % % % %
\section{Model and Assumptions}
\label{sec:model and assumptions}
% % % % % % % % % % % %

\subsection{Setup}
\label{subsec:model}
To introduce the theoretical setting, we suppose that an agent $a_X$ has drawn $n_X$ independent samples from the random variable $X$, and that $a_X$ wishes to estimate the unknown true mean $\truemean \triangleq \expectation$.
For instance, $a_X$ could be a clinical researcher interested in accurately estimating the effect size of a new treatment from the $n_X$ patients participating in a trial at their hospital.

Agent $a_X$ can construct an empirical estimator for \TRUEMEAN{} based on its own $n_X$ samples.
For instance, $a_X$ could use the local empirical mean \LOCALMEAN{} from \cref{def:empirical means}, which is unbiased and consistent.
Alternatively, agent $a_X$ can enlist the help of another agent, $a_Y$, who
has calculated the empirical mean~\LOCALMEAN[Y] 
of $n_Y$ samples drawn from a different random variable~$Y$.
This allows $a_X$ to estimate \TRUEMEAN{} by a weighted average of \LOCALMEAN{} and \LOCALMEAN[Y], denoted as \WAVG{}, where $\alpha \in \left[0, 1\right]$ is the weight of \LOCALMEAN[Y] (\cref{def:weighted average}).

\begin{definition}[Empirical mean]
    \label{def:empirical means}
    \SAMPLINGXY.
    We recall the \emph{empirical means} \LOCALMEANs: 
    \begin{equation*}
        \localmean \triangleq \frac{1}{n_X} \sum_{i=1}^{n_X} x_i, \quad \localmean[Y] \triangleq \frac{1}{n_Y} \sum_{i=1}^{n_Y} y_i
    \end{equation*}
\end{definition}
\begin{definition}[Weighted average]
    \label{def:weighted average}
    We define the \emph{weighted model average} \WAVG{} of the two means as:
    \begin{equation*}
        \wavg \triangleq \brackets{1-\alpha} \localmean + \alpha \localmean[Y]
    \end{equation*}
\end{definition}

\paragraph{Examples of Helper Agent $a_Y$.}
In our example of a clinical trial, the helper agent $a_Y$ could be another researcher conducting a trial of a similar treatment in a different location.
Alternatively, \LOCALMEAN[Y] could also be a global model obtained through federated learning by several other hospitals conducting treatment trials.
As we do not make any assumptions on $Y$, it could also simply be an arbitrary constant (cf. \cref{subsec:relation to James-Stein estimator}).
For instance, picking $Y = 0$ results in the shrinkage estimator $\wavg = \brackets{1 - \alpha} \localmean$.

\subsection{Problem Definition}
\label{subsec:task}
We assume that $a_X$ wants to use the estimator whose \emph{expected squared error} (\ESETEXT{}) with respect to 
\TRUEMEAN{} is minimal.
\emph{Optimality} is therefore understood throughout this paper in terms of the \ESETEXT.
Note that the estimator \WAVG{} is biased for $\alpha>0$ unless $\expectation[Y] = \truemean$, see \cref{def:ese}.

\begin{definition}[Expected squared error]
    \label{def:ese}
    Recall that $\truemean \triangleq \expectation$.
    By \ESE, we denote the \ESETEXT{} of \WAVG{} w.r.t. \TRUEMEAN.
    \begin{equation*}
        \ese = \eseformula 
    \end{equation*}
\end{definition}
This focus on the estimator's \ESETEXT{} implies that $a_X$ is equally concerned about underestimating the mean, as about overestimating it---which need not be the case in practice.
Furthermore, by focusing on the squared error, we assume that $a_X$ prefers a high probability of incurring a relatively small error, than an $n$ times lower probability of incurring an $n$ times larger error.

The task of $a_X$ reduces to selecting an optimal weight \ALPHAOPT{} (\cref{def:alpha*}).
Indeed, \LOCALMEANs{} can be expressed as \WAVG[0] and  \WAVG[1], respectively.
The \emph{global model}, i.e. the empirical mean over the union of $a_X$ and $a_Y$'s samples, is obtained by selecting $\alpha = n_Y/\brackets{n_X + n_Y}$.

The global model is optimal if $X$ and $Y$ have the same mean and variance, because it is the weight for which the variance of \WAVG{} is minimal.
However, a greater weight can be optimal (e.g., if $\expectation[Y] = \expectation$ and $\variance[Y] < \variance$), whereas a smaller weight is optimal if the true means of $X$ and $Y$ are very dissimilar or if $\variance[Y] > \variance$.

To quantify the optimal weight \ALPHA{} for any distributions $X$ and $Y$, we express the \ESETEXT{} of \WAVG{} as a function of \ALPHA.
We answer the questions:
\textit{Which weight \ALPHAOPT{} would we tell $a_X$ to use, if we had perfect knowledge of $X$ and $Y$?
Exactly how much smaller would the error \OPTESE{} be, relative to \ESE[0]?
How large would the error \ESE{} be if $a_X$ used a different weight \ALPHA?}

\begin{definition}[Optimal model averaging weight]
    \label{def:alpha*}
    We denote \ALPHAOPT{} the weight which minimizes the \ESETEXT{} of \OPTAVG{} w.r.t. $\expectation$:
    \begin{equation*}
        \alphaopt \triangleq \underset{\alpha}{\arg \min} \left( \ese \right)
    \end{equation*}
\end{definition}

\subsection{Assumptions on $X$ and $Y$}
\label{subsec:assumptions on X and Y}
We make \emph{no assumptions} on the data distributions $X$ and $Y$---not even that they are related.
However, our results in their strongest form depend on $\variance$ being non-zero, and on $X$ and $Y$ having a finite mean and variance. Instead of making specific assumptions on $X$ and $Y$, we express our results as functions of the unknown quantities $\squarediff{\expectation[Y]}{\expectation}$, $\variance/n_X$, and $\variance[Y]/n_Y$.
This approach allows us to quantify the potential of weighted model averaging exactly, for any distributions $X$ and $Y$ (where $Y$ could be the union of all other agents' data sets in FL).
On the other hand, since our results depend on unknown quantities, they cannot be used directly in empirical estimation problems.
While it is certain that no empirical estimator based on the weighted averaging of \LOCALMEANs{} can do better than the optimal estimator \OPTAVG, it is much less clear that this bound can actually be attained in practice.

% % % % % % % % % % % %
\section{Theoretical Results}
\label{sec:theoretical results}
% % % % % % % % % % % %

In \cref{thm:alpha_opt}, we find an analytic expression for the optimal model weight \ALPHAOPT.
We conclude in \cref{corollary:alpha>0}, that the local empirical mean is dominated unless $X$ is deterministic.
Subsequently, we give the \ESETEXT{} of the weighted average \WAVG{} for any \ALPHA{} in \cref{thm:ese of any alpha},
finding its minimum in \cref{corollary:ese of alpha*} and its maximum in \cref{corollary:max ese}.

\subsection{Prelude: Estimator Properties}
\label{subsec:estimators}
In \cref{lemma:expectation and variance of empirical mean,lemma:expectation and variance of weighted average,lemma:ese of local estimator,lemma:formula for ese and convexity}, we calculate statistical properties of the estimators introduced in \cref{sec:model and assumptions}, including their \ESETEXT.
The proofs of these basic results are relegated to \cref{sec:lemma proofs for theoretical results}.

\begin{lemma}
    \label{lemma:expectation and variance of empirical mean}
    Let $\expectation = \truemean$, $\expectation[Y] = \truemean[Y]$, $\variance = \varshort$, and $\variance[Y] = \varshort[Y]$.
    We recall the properties of the empirical means \LOCALMEANs:
    \begin{align*}
        \expectation[\localmean] &= \truemean,  & \expectation[\localmean[Y]] &= \truemean[Y]  \\
        \variance[\localmean]    &= \scaledVar, & \variance[\localmean[Y]]    &=  \scaledVar[Y]
    \end{align*}
\end{lemma}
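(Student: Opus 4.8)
The plan is to verify all four identities directly from \cref{def:empirical means}, using only two elementary facts: linearity of expectation for the means, and the vanishing of cross-covariances under the i.i.d.\ assumption for the variances. Because the claims for \LOCALMEAN{} and \LOCALMEAN[Y] are perfectly symmetric, I would carry out the computation for \LOCALMEAN{} and observe that the argument for \LOCALMEAN[Y] follows verbatim upon replacing $X, n_X$ by $Y, n_Y$.

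For the expectation, I would factor the constant $1/n_X$ out of $\expectation[\localmean]$ and apply linearity of expectation to the finite sum, obtaining $\expectation[\localmean] = \frac{1}{n_X}\sum_{i=1}^{n_X} \expectation[x_i]$. Since each $x_i$ is distributed as $X$, every term equals $\truemean$, the sum collapses to $n_X \truemean$, and the prefactor cancels to give $\expectation[\localmean] = \truemean$.

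For the variance, I would use the scaling rule $\variance[cZ] = c^2 \variance[Z]$ to pull out a factor $1/n_X^2$, and then invoke independence: since the samples $x_1, \dots, x_{n_X}$ are i.i.d., all covariance cross-terms vanish, so the variance of the sum equals the sum of the individual variances. Each summand contributes $\varshort$, yielding $\variance[\localmean] = \frac{1}{n_X^2}\, n_X\, \varshort = \scaledVar$.

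I do not expect any genuine obstacle here, as this is a standard textbook computation. The only point worth flagging is that the variance identity relies on the full i.i.d.\ (or at least pairwise-uncorrelated) structure of the samples guaranteed by \cref{def:empirical means}, whereas the expectation identity would already follow from the identical-distribution assumption alone.
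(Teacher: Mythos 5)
Your proof is correct and follows essentially the same route as the paper's: linearity of expectation for the means, and factoring out $1/n_X^2$ plus vanishing cross-covariances under independence for the variances, with the $Y$ case handled by symmetry. No issues.
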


\begin{lemma}
    \label{lemma:expectation and variance of weighted average}
    Recall the nomenclature of \cref{lemma:expectation and variance of empirical mean}. The weighted average \WAVG{} has the following properties:
    \begin{alignat*}{2}
        \expectation[\wavg] &= \brackets{1-\alpha} \truemean &&+ \alpha \truemean[Y] \\
        \variance[\wavg] &=
        \brackets{1-\alpha}^2 \scaledVar &&+ \alpha^2 \scaledVar[Y]
    \end{alignat*}
\end{lemma}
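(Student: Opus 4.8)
The plan is to derive both identities directly from the definition $\wavg = \brackets{1-\alpha} \localmean + \alpha \localmean[Y]$ (\cref{def:weighted average}), combined with the first two moments of the empirical means already established in \cref{lemma:expectation and variance of empirical mean}. Nothing beyond linearity of expectation and the variance-of-a-linear-combination identity is required.

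For the expectation, I would invoke linearity to write $\expectation[\wavg] = \brackets{1-\alpha}\expectation[\localmean] + \alpha \expectation[\localmean[Y]]$, then substitute $\expectation[\localmean] = \truemean$ and $\expectation[\localmean[Y]] = \truemean[Y]$ from \cref{lemma:expectation and variance of empirical mean}. This immediately gives $\brackets{1-\alpha}\truemean + \alpha \truemean[Y]$, which is the claimed first identity.

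For the variance, I would apply the general rule $\variance[aU + bV] = a^2 \variance[U] + b^2 \variance[V] + 2ab\operatorname{Cov}[U, V]$ with $U = \localmean$, $V = \localmean[Y]$, $a = 1-\alpha$, and $b = \alpha$. The one step worth spelling out — and the only conceptual subtlety rather than a genuine obstacle — is that the cross-covariance term vanishes. This holds because the samples \SAMPLES{} and \SAMPLES[Y] are drawn independently, belonging to the two distinct agents $a_X$ and $a_Y$ of the setup in \cref{subsec:model}; hence \LOCALMEAN{} and \LOCALMEAN[Y] are independent and $\operatorname{Cov}[\localmean, \localmean[Y]] = 0$. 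Substituting $\variance[\localmean] = \scaledVar$ and $\variance[\localmean[Y]] = \scaledVar[Y]$ from \cref{lemma:expectation and variance of empirical mean} then yields $\brackets{1-\alpha}^2 \scaledVar + \alpha^2 \scaledVar[Y]$, completing the proof. I expect the independence argument to be the only point needing explicit justification; the remaining algebra is routine and can be carried out in a single line each.
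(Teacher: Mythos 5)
Your proof is correct and follows essentially the same route as the paper: linearity of expectation for the first identity, and the quadratic scaling of variance under a linear combination for the second, with the moments substituted from \cref{lemma:expectation and variance of empirical mean}. The only difference is that you explicitly justify the vanishing of the cross-covariance via independence of the two agents' samples, a step the paper's proof leaves implicit; this is a harmless (indeed welcome) addition rather than a deviation.
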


\begin{lemma}
    \label{lemma:ese of local estimator}
    We compute the \ESETEXT{} of the empirical means:
    \begin{align*}
        \ese[0]
        &= \eseformula[\localmean]
        = \scaledVar
        \\
        \ese[1]
        &= \eseformula[\localmean[Y]]
        = \squarediff{\truemean[Y]}{\truemean}  + \scaledVar[Y]
    \end{align*}
\end{lemma}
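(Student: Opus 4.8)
The plan is to evaluate the weighted average \WAVG{} at the two endpoints of its weight range and, in each case, to reduce the \ESETEXT{} of \cref{def:ese} to a quantity already supplied by \cref{lemma:expectation and variance of empirical mean}. By \cref{def:weighted average}, setting $\alpha = 0$ gives $\wavg[0] = \localmean$ and setting $\alpha = 1$ gives $\wavg[1] = \localmean[Y]$, so \ESE[0] and \ESE[1] are nothing but the \ESETEXT{} of \LOCALMEAN{} and \LOCALMEAN[Y], respectively, each measured against the common local target \TRUEMEAN.

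For \ESE[0] I would use that \LOCALMEAN{} is unbiased for \TRUEMEAN, that is $\expectation[\localmean] = \truemean$ by \cref{lemma:expectation and variance of empirical mean}. Consequently $\eseformula[\localmean]$ is by definition the variance $\variance[\localmean]$, which the same lemma evaluates as $\scaledVar$; no further work is needed in this case.

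For \ESE[1] the target \TRUEMEAN{} no longer coincides with the mean of the estimator, since $\expectation[{\localmean[Y]}] = \truemean[Y]$. I would therefore perform the standard bias--variance split: write $\localmean[Y] - \truemean = (\localmean[Y] - \truemean[Y]) + (\truemean[Y] - \truemean)$, expand the square, and take expectations term by term. The squared first bracket contributes the variance $\scaledVar[Y]$; the squared second bracket is the constant squared bias $\squarediff{\truemean[Y]}{\truemean}$; and the cross term carries the factor $\expectation[{\localmean[Y]}] - \truemean[Y]$, which vanishes by the unbiasedness just noted. Summing the two surviving terms gives the claimed value of \ESE[1].

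The computation is routine throughout, so I expect no serious obstacle; the only point requiring a little care is the vanishing of the cross term in the second case, which rests solely on the unbiasedness $\expectation[{\localmean[Y]}] = \truemean[Y]$ and not on any relationship between \DISTRIBUTIONS. Finiteness of the second moments of $X$ and $Y$ is all that is needed for the two expectations to be well defined.
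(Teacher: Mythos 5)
Your proposal is correct and follows essentially the same route as the paper: both reduce \ESE[0] and \ESE[1] to the bias--variance decomposition, using the unbiasedness of the empirical means from \cref{lemma:expectation and variance of empirical mean} to kill the cross term (the paper derives the decomposition by adding and subtracting $\left(\expectation[\hat{\theta}]\right)^2$ rather than splitting inside the square, but this is the same identity). No gaps.
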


\begin{lemma}
    \label{lemma:formula for ese and convexity}
    We find the \ESETEXT{} of \WAVG{} analytically below. It is convex w.r.t. \ALPHA. If \CONDITIONS, then it is strictly convex.
    \[
        \ese
        = \brackets{1-\alpha}^2 \ese[0] + \alpha^2 \ese[1]
    \]
\end{lemma}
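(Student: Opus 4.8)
The plan is to obtain the formula through the standard bias-variance decomposition and then reduce the two resulting terms to quantities already computed. Since the target \TRUEMEAN{} is a fixed constant, the \ESETEXT{} of \WAVG{} splits as
\[
    \ese = \eseformula = \variance[\wavg] + \squarediff{\expectation[\wavg]}{\truemean},
\]
namely the variance of the estimator plus its squared bias. \cref{lemma:expectation and variance of weighted average} supplies both pieces directly: the variance equals $\brackets{1-\alpha}^2 \scaledVar + \alpha^2 \scaledVar[Y]$, while the expectation equals $\brackets{1-\alpha}\truemean + \alpha \truemean[Y]$, so the bias collapses to $\alpha\brackets{\truemean[Y] - \truemean}$ and the squared bias to $\alpha^2 \squarediff{\truemean[Y]}{\truemean}$. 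Substituting and grouping the $\alpha^2$ terms gives
\[
    \ese = \brackets{1-\alpha}^2 \scaledVar + \alpha^2 \brackets{\squarediff{\truemean[Y]}{\truemean} + \scaledVar[Y]}.
\]
By \cref{lemma:ese of local estimator}, the coefficient of $\brackets{1-\alpha}^2$ is \ESE[0] and the coefficient of $\alpha^2$ is \ESE[1], which yields the claimed identity $\ese = \brackets{1-\alpha}^2 \ese[0] + \alpha^2 \ese[1]$.

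For the two convexity claims I would differentiate this closed form twice in \ALPHA. The first derivative is $-2\brackets{1-\alpha}\ese[0] + 2\alpha \ese[1]$, and the second derivative is the constant $2\brackets{\ese[0] + \ese[1]}$. Both \ESE[0] and \ESE[1] are nonnegative---the former is a variance, and the latter is a square plus a variance by \cref{lemma:ese of local estimator}---so the second derivative is nonnegative everywhere, giving convexity. For strict convexity it then suffices that $\ese[0] + \ese[1] > 0$. Writing this sum out as $\scaledVar + \squarediff{\truemean[Y]}{\truemean} + \scaledVar[Y]$, a sum of three nonnegative terms, it is strictly positive precisely when at least one of \VARSHORT, $\squarediff{\truemean[Y]}{\truemean}$, or \VARSHORT[Y] is nonzero, which is exactly the hypothesis \CONDITIONS.

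None of the individual steps is difficult, and the computation is routine once the decomposition is in place. The only point warranting care is the final equivalence: matching positivity of $\ese[0] + \ese[1]$ to the stated max condition relies on the sample sizes $n_X$ and $n_Y$ being strictly positive, so that dividing the variances by them neither creates nor removes a zero, and the maximum over the three numerators faithfully governs strict convexity.
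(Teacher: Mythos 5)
Your proof is correct and follows essentially the same route as the paper's: the bias--variance decomposition of \ESE{}, substitution of the expectation and variance from \cref{lemma:expectation and variance of weighted average}, identification of the coefficients via \cref{lemma:ese of local estimator}, and convexity via the constant second derivative $2\brackets{\ese[0]+\ese[1]}$. Your closing remark about $n_X, n_Y > 0$ being needed so that the max condition faithfully tracks positivity of $\ese[0]+\ese[1]$ is a small point of care the paper leaves implicit, but it does not change the argument.
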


\subsection{Optimal Model Averaging Weight}
\label{subsec:optimality}

In \cref{subsec:task}, we defined the goal of agent $a_X$ as selecting an optimal weight \ALPHAOPT{} to minimize the \ESETEXT{} of \WAVG{} w.r.t. \TRUEMEAN.
We prove the existence and (under minimal assumptions) uniqueness of \ALPHAOPT{} in \cref{thm:alpha_opt}, where we also find its analytical form.
This result is illustrated in \cref{fig:alpha}.
In \cref{corollary:alpha>0}, we conclude that, unless $\variance = 0$, the empirical mean \LOCALMEAN{} is always dominated by \emph{some} weighted average \WAVG{} with $\alpha > 0$.
Finally, in \cref{corollary:two upper bounds on alpha*} we prove an upper bound on \ALPHAOPT{} that is simpler than \cref{thm:alpha_opt}.

\begin{theorem}
    \label{thm:alpha_opt}
    Recall the setting and the estimators from \cref{def:empirical means,def:weighted average,def:alpha*}, as well as the nomenclature of \cref{lemma:expectation and variance of empirical mean}.
    There \textbf{exists} an $\alphaopt \in \left[0,1\right]$ that minimizes the \ESETEXT{} of \WAVG{} w.r.t. \TRUEMEAN.
    Further, if \CONDITIONS, then \ALPHAOPT{} is \textbf{unique} and satisfies the following equality:
    \begin{equation*}
        \label{eq:alpha_opt}
        \alphaopt
        = \frac{\ese[0]}{\ese[0] + \ese[1]}
        \left( = \frac{\variance[\localmean] }{ \variance[\localmean] + \bias  + \variance[\localmean[Y]] } \right)
    \end{equation*}
\end{theorem}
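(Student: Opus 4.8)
The plan is to build directly on \cref{lemma:formula for ese and convexity}, which already supplies the closed form $\ese = \brackets{1-\alpha}^2 \ese[0] + \alpha^2 \ese[1]$ together with its convexity (and strict convexity under the stated condition). This reduces both claims to elementary facts about minimizing an explicit quadratic in \ALPHA, so the heavy lifting has effectively been done by the preceding lemmas.

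For \textbf{existence}, I would note that \ESE{} is a continuous (indeed polynomial) function of \ALPHA{} on the compact interval $[0,1]$, so by the extreme value theorem it attains its minimum there; thus a minimizer \ALPHAOPT{} exists with no assumptions on \DISTRIBUTIONS. For the \textbf{formula and uniqueness}, I would assume \CONDITIONS{} and invoke the strict convexity from \cref{lemma:formula for ese and convexity}: a strictly convex differentiable function on an interval has a unique minimizer, pinned down by its stationarity condition. Setting $\dd \ese = 0$ yields $-2\brackets{1-\alpha}\ese[0] + 2\alpha\,\ese[1] = 0$, i.e. $\alpha\brackets{\ese[0] + \ese[1]} = \ese[0]$. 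The condition guarantees $\ese[0] + \ese[1] > 0$---this sum equals $\scaledVar + \bias + \scaledVar[Y]$, which is positive exactly when the maximum of its terms is---so the unique stationary point is $\alphaopt = \ese[0]/\brackets{\ese[0] + \ese[1]}$. The parenthetical form then follows by substituting \ESE[0] and \ESE[1] from \cref{lemma:ese of local estimator}.

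The one point requiring care---and the closest thing to an obstacle---is reconciling the \emph{unconstrained} stationary point with the constraint $\alpha \in [0,1]$ implicit in \cref{def:weighted average}. I would resolve this by observing that $\ese[0], \ese[1] \geq 0$ forces $\ese[0]/\brackets{\ese[0]+\ese[1]} \in [0,1]$, so the interior critical point automatically respects the constraint and no boundary case analysis is needed. It is also worth remarking on the excluded degenerate case: when the condition fails, all three quantities vanish and \ESE{} is identically zero, so every \ALPHA{} minimizes it and the denominator $\ese[0]+\ese[1]$ would vanish---precisely why uniqueness and the explicit formula require \CONDITIONS.
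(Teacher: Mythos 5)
Your proposal is correct and follows essentially the same route as the paper's proof: invoke \cref{lemma:formula for ese and convexity} for (strict) convexity, set $\dd \ese = 0$ to obtain $\alphaopt = \ese[0]/\brackets{\ese[0]+\ese[1]}$ under \CONDITIONS, and use non-negativity of \ESE[0] and \ESE[1] to place the minimizer in $\left[0,1\right]$. The only cosmetic difference is that you justify existence via the extreme value theorem on the compact interval rather than via convexity directly, which is if anything slightly cleaner.
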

\begin{proof}[Proof of \cref{thm:alpha_opt}]
    Recall from \cref{lemma:formula for ese and convexity} that the error is convex in \ALPHA,
    from which we conclude that it admits at least one global minimum.
    \textbf{If $\brackets{\truemean[Y] - \truemean}^2 = 0 = \varshort = \varshort[Y]$,} then
    the error vanishes irrespective of \ALPHA{} (by \cref{lemma:formula for ese and convexity}).
    Thus, any \ALPHA{} minimizes the error, including all $\alpha \in \left[0,1\right]$.
    \textbf{Otherwise,} the error is strictly convex (by \cref{lemma:formula for ese and convexity}) and is uniquely minimized by \ALPHAOPT{} s.t. $0 = \dd \ese$.
    Thus, we find:
    \begin{align*}
        0
        = \dd \ese
        &= 2 \left(\alphaopt - 1\right) \ese[0] + 2 \alphaopt \ese[1]
        \\
        \Leftrightarrow
        \quad
        2 \ese[0]
        &= 2 \alphaopt \brackets{\ese[0] + \ese[1]}
        \\
        \overset{\max \curlyBrackets{\ese[0], \ese[1]} > 0}{\Leftrightarrow}
        \alphaopt
        &= \frac{\ese[0]}{\ese[0] + \ese[1]}
    \end{align*}
    In the last line above, we have made use of the assumption that \CONDITIONS, which implies that \ESE[0] or \ESE[1] is strictly positive.

    To prove that $\alphaopt \in \left[0,1\right]$, we rewrite the last equation to obtain (using the non-negativity of the \ESETEXT):
    \begin{equation*}
        \alphaopt
        = \begin{cases}
            0, & \text{if } \ese[0] = 0 \\
            0 \leq \frac{1}{1 + \ese[1] / \ese[0]} \leq 1, & \text{otherwise}
        \end{cases}
        \qedhere
    \end{equation*}
\end{proof}

Using \cref{thm:alpha_opt}, we can ask exactly when the empirical mean $\wavg[0] = \localmean$ is optimal, leading to the following corollary:

\begin{theoremcorollary}
    \label{corollary:alpha>0}
    Let \THEMEANSANDVARS{} be finite, and let $\varshort > 0$.
    Then, the empirical mean $\wavg[0] = \localmean$ is dominated by some $\wavg[\alphaopt]$ with $\alphaopt > 0$.
    Formally:
    \begin{equation*}
        \left(
            \varshort \neq 0
        \right)
        \  \text{and} \ 
        \left(
            \max \curlyBrackets{\ese[0], \ese[1]} < \infty
        \right)
        \implies \alphaopt > 0
    \end{equation*}
\end{theoremcorollary}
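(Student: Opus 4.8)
The plan is to reduce the claim directly to the closed form for \ALPHAOPT{} established in \cref{thm:alpha_opt}. First I would observe that the hypothesis $\varshort > 0$ immediately guarantees that \CONDITIONS{} holds, since \VARSHORT{} is one of the three quantities appearing in that maximum. Consequently the uniqueness branch of \cref{thm:alpha_opt} applies, and I may write $\alphaopt = \ese[0] / \brackets{\ese[0] + \ese[1]}$ without further justification.

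Next I would show that the numerator is strictly positive. By \cref{lemma:ese of local estimator} we have $\ese[0] = \scaledVar$, and because $\varshort > 0$ while $n_X \geq 1$, this yields $\ese[0] > 0$. It then remains only to control the denominator: non-negativity of the \ESETEXT{} gives $\ese[0] + \ese[1] \geq \ese[0] > 0$, while the finiteness hypothesis $\max \curlyBrackets{\ese[0], \ese[1]} < \infty$ ensures $\ese[0] + \ese[1] < \infty$. Hence $\alphaopt$ is the ratio of a strictly positive number to a finite positive number, so $\alphaopt > 0$, as required.

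The only genuine subtlety — and the point at which the finiteness hypothesis earns its place — is the denominator. Without $\ese[1] < \infty$ the fraction could degenerate to $0$ (morally, $\ese[0]/\infty$), which would correspond to an infinitely biased or infinitely noisy helper rendering collaboration worthless even though $X$ is non-deterministic. Everything else amounts to substituting \cref{lemma:ese of local estimator} into the formula from \cref{thm:alpha_opt}, so I expect no real obstacle beyond carefully tracking that the denominator stays simultaneously finite and nonzero.
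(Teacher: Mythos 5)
Your argument is correct and follows essentially the same route as the paper's proof: invoke the closed form $\alphaopt = \ese[0]/\brackets{\ese[0] + \ese[1]}$ from \cref{thm:alpha_opt}, note that $\varshort > 0$ makes the numerator strictly positive via \cref{lemma:ese of local estimator}, and use the finiteness hypothesis to keep the denominator bounded. Your explicit check that $\varshort > 0$ triggers the uniqueness branch of \cref{thm:alpha_opt} is a small point the paper leaves implicit, but otherwise the two proofs coincide.
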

\begin{proof}[Proof of \cref{corollary:alpha>0}]
    By \cref{lemma:ese of local estimator}, \ESE[0] and \ESE[1] are finite if \THEMEANSANDVARS{} are finite.
    This, in turn, implies that the denominator of \ALPHAOPT{} is bounded.
    Additionally, \ESE[0] is strictly positive if $\varshort \neq 0$.
    Hence, we conclude from \cref{thm:alpha_opt} that \ALPHAOPT{} is strictly positive under the stated conditions.
\end{proof}

Finally, we can prove simpler bounds on \ALPHAOPT{} to improve our intuition:
\ALPHAOPT{} is only high when the local model has a lot of variance to trade away ($\variance[\localmean]$ is large) and when \LOCALMEAN[Y] has a low variance and is not unreasonably biased ($\variance[\localmean[Y]]$ and \BIAS{} are small):
\begin{theoremcorollary}
    \label{corollary:two upper bounds on alpha*}
    Let $\varshort > 0$. Then:
    \begin{equation}
        \label{eq:two upper bounds on alpha*}
        \alphaopt <  \min\left\{
            \frac{\variance[\localmean]}{\bias}
            ,
            \frac{\variance[\localmean]}{\variance[\localmean[Y]]}
        \right\}
    \end{equation}
\end{theoremcorollary}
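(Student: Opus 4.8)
The plan is to read off the closed form for \ALPHAOPT{} from \cref{thm:alpha_opt} and observe that replacing the full denominator by just one of its summands can only increase the ratio. First I would record that the hypothesis $\varshort > 0$ forces $\variance[\localmean] = \scaledVar > 0$; in particular \CONDITIONS, so \cref{thm:alpha_opt} delivers the explicit value $\alphaopt = \variance[\localmean]/\brackets{\variance[\localmean] + \bias + \variance[\bar Y]}$ rather than merely an existence statement. Note also that this denominator is a sum of three nonnegative terms, one of which, $\variance[\localmean]$, is strictly positive.

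For the first bound, $\alphaopt < \variance[\localmean]/\bias$, I would distinguish two cases. If $\bias = 0$ the right-hand side is $+\infty$ and the inequality holds trivially since $\alphaopt \le 1$. If $\bias > 0$, then because the common numerator $\variance[\localmean]$ is strictly positive I may cancel it, reducing the claim to $\variance[\localmean] + \bias + \variance[\bar Y] > \bias$, i.e. to $\variance[\localmean] + \variance[\bar Y] > 0$; this holds because $\variance[\localmean] > 0$ and $\variance[\bar Y] \ge 0$.

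The second bound, $\alphaopt < \variance[\localmean]/\variance[\bar Y]$, is entirely symmetric. If $\variance[\bar Y] = 0$ the right-hand side is $+\infty$ and the bound is immediate; otherwise cancelling the positive numerator reduces it to $\variance[\localmean] + \bias + \variance[\bar Y] > \variance[\bar Y]$, i.e. to $\variance[\localmean] + \bias > 0$, which again follows from $\variance[\localmean] > 0$. Taking the minimum of the two bounds yields the corollary.

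The computation is routine; the only point requiring care---and hence the main (minor) obstacle---is the degenerate bookkeeping when $\bias$ or $\variance[\bar Y]$ vanishes, in which case the corresponding ratio is $+\infty$ and the strict inequality must be read as $\alphaopt < +\infty$. Away from these cases the essence is simply that the denominator of \ALPHAOPT{} strictly exceeds each targeted summand, precisely because the remaining terms are nonnegative and $\variance[\localmean]$ is strictly positive.
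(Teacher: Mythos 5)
Your proof is correct and follows essentially the same route as the paper: both start from the closed form of \ALPHAOPT{} in \cref{thm:alpha_opt}, drop the extra nonnegative summands from the denominator (using $\variance[\localmean] > 0$ to make the inequality strict), and handle the vanishing-denominator cases via the $+\infty$ convention. The only cosmetic difference is that the paper first divides numerator and denominator by $\variance[\localmean]$ before discarding the leading $1$, whereas you cancel the numerator directly; the underlying argument is identical.
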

\begin{proof}[Proof of \cref{corollary:two upper bounds on alpha*}]
    $\varshort > 0$ implies $\variance[\localmean] > 0$, thus we write \cref{thm:alpha_opt} as:
    \begin{equation}
    \label{eq:alpha* with only two terms}
        \alphaopt
        = \tfrac{1}{1 + \frac{\bias}{\variance[\localmean]} + \frac{\variance[\localmean[Y]]}{\variance[\localmean]}}
        < \frac{1}{\frac{\bias}{\variance[\localmean]} + \frac{\variance[\localmean[Y]]}{\variance[\localmean]}}
    \end{equation}
    The result follows by observing that $\variance[\localmean]$, \BIAS{}, and $\variance[\localmean[Y]]$ are non-negative.
    By abuse of notation, we allow division by 0 in \cref{eq:two upper bounds on alpha*} and equate it to $+\infty$.
\end{proof}

\subsection{Expected Squared Error}

Armed with a formula for \ALPHAOPT{}, we now investigate the effect of (optimally) weighted model averaging on the \ESETEXT.
\Cref{thm:ese of any alpha}, which is illustrated in \cref{fig:ese of wavg} (\cref{sec of fig ese of wavg}), relates the error of any \WAVG{} to that of \LOCALMEAN.
In \cref{corollary:ese of alpha*,corollary:ese of 2alpha*,corollary:ese of 1}, we calculate the \ESETEXT{} of \OPTAVG, \WAVG[2\alphaopt], and \LOCALMEAN[Y].
Finally, we prove in \cref{corollary:max ese} that the error of \OPTAVG{} is bounded by \ESE[0] and \ESE[1].

\begin{theorem}
    \label{thm:ese of any alpha}
    If \ALPHAOPT{} is unique and positive (i.e., if $\varshort >0$), then we can express \ESE{} as:
    \[
        \ese = \brackets{1 + \alpha \brackets{\frac{\alpha}{\alphaopt} - 2}} \ese[0]
    \]
\end{theorem}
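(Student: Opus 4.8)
The plan is to start from the analytic formula for the error established in \cref{lemma:formula for ese and convexity}, namely $\ese = \brackets{1-\alpha}^2 \ese[0] + \alpha^2 \ese[1]$, and massage it into the claimed form by eliminating \ESE[1] in favour of \ALPHAOPT{} and \ESE[0]. The key algebraic observation is that \cref{thm:alpha_opt} gives $\alphaopt = \ese[0]/\brackets{\ese[0] + \ese[1]}$, which can be rearranged (legitimately, since the hypothesis $\varshort > 0$ forces $\ese[0] > 0$ by \cref{lemma:ese of local estimator}, and hence \ALPHAOPT{} is unique, positive, and nonzero) to express \ESE[1] as $\ese[1] = \ese[0] \brackets{\frac{1}{\alphaopt} - 1} = \ese[0] \frac{1 - \alphaopt}{\alphaopt}$.

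Next I would substitute this expression for \ESE[1] into the formula from \cref{lemma:formula for ese and convexity} and factor out \ESE[0]. Concretely, this gives
\[
    \ese = \ese[0] \squareBrackets{\brackets{1-\alpha}^2 + \alpha^2 \frac{1 - \alphaopt}{\alphaopt}}.
\]
The remaining task is purely to show that the bracketed factor equals $1 + \alpha \brackets{\frac{\alpha}{\alphaopt} - 2}$. I would expand $\brackets{1-\alpha}^2 = 1 - 2\alpha + \alpha^2$ and combine the two $\alpha^2$ contributions: the $\alpha^2$ from the expansion and the $\alpha^2 \frac{1-\alphaopt}{\alphaopt} = \frac{\alpha^2}{\alphaopt} - \alpha^2$ from the substituted term. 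The bare $\alpha^2$ terms cancel, leaving $1 - 2\alpha + \frac{\alpha^2}{\alphaopt} = 1 + \alpha\brackets{\frac{\alpha}{\alphaopt} - 2}$, which is exactly the desired expression.

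There is essentially no hard part here: the result is a one-line reparametrisation of the convex quadratic in \cref{lemma:formula for ese and convexity}, trading the opaque quantity \ESE[1] for the more interpretable \ALPHAOPT. The only genuine subtlety is bookkeeping the hypothesis. I would make explicit at the outset that $\varshort > 0$ is precisely what guarantees (via \cref{lemma:ese of local estimator} and \cref{thm:alpha_opt}) that $\ese[0] \neq 0$, so that dividing by \ALPHAOPT{} and by \ESE[0] is valid and the rearrangement $\ese[1] = \ese[0]\brackets{1/\alphaopt - 1}$ is well-defined; this is the same condition under which \cref{thm:alpha_opt} asserts uniqueness of \ALPHAOPT. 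With that caveat stated, the proof is just the substitution and the cancellation of the $\alpha^2$ terms described above.
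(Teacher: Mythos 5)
Your proposal is correct and follows essentially the same route as the paper's proof: both invert the formula $\alphaopt = \ese[0]/\brackets{\ese[0]+\ese[1]}$ from \cref{thm:alpha_opt} to write $\ese[1] = \ese[0]\brackets{1/\alphaopt - 1}$, substitute into \cref{lemma:formula for ese and convexity}, and cancel the $\alpha^2$ terms. Your explicit bookkeeping of why $\varshort > 0$ licenses the divisions is a welcome clarification of a point the paper treats more tersely.
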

\begin{proof}[Proof of \cref{thm:ese of any alpha}]
    We use our assumption ($\varshort > 0 \implies \ese[0] > 0$) to express \ESE[1] as a function of \ESE[0]:
    \begin{multline*}
        \ese[1]
        = \frac{\ese[0]}{\ese[0]} \frac{\brackets{\ese[0] + \ese[1]} \ese[1]}{\ese[0] + \ese[1]}
        = \ese[0] \frac{\ese[0] + \ese[1]}{\ese[0]} \brackets{1 - \frac{\ese[0]}{\ese[0] + \ese[1]}}
        \\
        = \ese[0] \frac{1-\alphaopt}{\alphaopt}
        = \ese[0] \brackets{\frac{1}{\alphaopt} - 1}
    \end{multline*}
    Inserting into \cref{lemma:formula for ese and convexity} yields:
    \begin{multline*}
        \ese
        = \brackets{1-\alpha}^2 \ese[0] + \alpha^2 \ese[0] \brackets{\frac{1}{\alphaopt} - 1}
        \\
        = \ese[0] \brackets{1 - 2\alpha + \alpha^2 + \alpha^2 \brackets{\frac{1}{\alphaopt} - 1}}
        \\
        = \ese[0] \brackets{1 + \alpha \brackets{-2 + \frac{\alpha}{\alphaopt}}}
        \qedhere
    \end{multline*}
\end{proof}
Inserting \ALPHAOPT{} into \cref{thm:ese of any alpha}, we obtain the minimum error of weighted model averaging:
\begin{theoremcorollary}
    \label{corollary:ese of alpha*}
    Estimating \TRUEMEAN{} by $\wavg[\alphaopt]$ instead of \LOCALMEAN{} leads to a $(100\cdot\alphaopt) \%$ reduction in \ESETEXT{} if $\varshort > 0$.
    Formally:
    \[
        \optese = \brackets{1 - \alphaopt } \ese[0]
        \, \  = \min_{\alpha \in \squareBrackets{0,1}} \ese
    \]
\end{theoremcorollary}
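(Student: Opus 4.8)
The plan is to derive both equalities directly from results already in hand, with essentially no new computation. For the first equality, I would simply substitute $\alpha = \alphaopt$ into the formula of \cref{thm:ese of any alpha}. Since $\varshort > 0$, that theorem applies and $\alphaopt$ is unique and positive, so evaluating the prefactor at $\alpha = \alphaopt$ gives $1 + \alphaopt \brackets{\frac{\alphaopt}{\alphaopt} - 2} = 1 + \alphaopt(1 - 2) = 1 - \alphaopt$, whence $\optese = \brackets{1 - \alphaopt} \ese[0]$. This is a one-line substitution, and it immediately yields the interpretive claim in the statement: the factor $(1 - \alphaopt)$ shows that using $\wavg[\alphaopt]$ in place of \LOCALMEAN{} reduces the error by a fraction exactly equal to $\alphaopt$.

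The second equality asserts that $\optese$ is the minimum of \ESE{} over the admissible interval $\squareBrackets{0,1}$. Here I would invoke \cref{def:alpha*} together with \cref{thm:alpha_opt}: by definition $\alphaopt$ is the unconstrained minimizer of $\ese$, and \cref{thm:alpha_opt} establishes both that this minimizer is unique (because $\varshort > 0$) and that it lies inside $\squareBrackets{0,1}$. Since the global minimizer of $\ese$ already belongs to $\squareBrackets{0,1}$, the constrained minimum over the interval coincides with the unconstrained minimum and is attained at $\alphaopt$. Equivalently, one may appeal to the convexity of $\ese$ from \cref{lemma:formula for ese and convexity}, which guarantees that a stationary argmin point interior to the feasible set is the global constrained minimum as well.

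I do not anticipate any genuine obstacle, since the statement is a direct corollary of \cref{thm:ese of any alpha} and \cref{thm:alpha_opt}. The only point requiring a moment of care is the passage from the unconstrained $\arg\min$ of \cref{def:alpha*} to the constrained minimization $\min_{\alpha \in \squareBrackets{0,1}} \ese$; this is resolved entirely by the containment $\alphaopt \in \squareBrackets{0,1}$ proved in \cref{thm:alpha_opt}, so that restricting the domain leaves both the minimizing weight and the minimal value unchanged.
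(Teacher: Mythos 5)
Your proof is correct and takes essentially the same route as the paper's: the first equality is obtained by substituting $\alpha = \alphaopt$ into \cref{thm:ese of any alpha} (the prefactor collapses to $1-\alphaopt$), and the identification of $\optese$ with $\min_{\alpha \in \squareBrackets{0,1}} \ese$ is delegated to \cref{thm:alpha_opt}, exactly as in the paper. Your extra remark about passing from the unconstrained argmin of \cref{def:alpha*} to the constrained minimum via $\alphaopt \in \squareBrackets{0,1}$ is a point the paper leaves implicit, but it introduces no difference in substance.
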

\begin{proof}[Proof of \cref{corollary:ese of alpha*}]
    The result follows from \cref{thm:ese of any alpha}, by observing that $\brackets{\frac{\alphaopt}{\alphaopt} - 2} = -1$.
    The equality $ \min_{\alpha \in \squareBrackets{0,1}} \ese = \optese$ follows from \cref{thm:alpha_opt}.
\end{proof}

We obtain \ESE[2\alphaopt] and \ESE[1] analogously by evaluating \cref{thm:ese of any alpha}.
\Cref{corollary:ese of 2alpha*} shows that $\alpha = 2\alphaopt$ is no better than using \LOCALMEAN, while \cref{corollary:ese of 1} shows that the largest possible weight $\alpha = 1$ does better than \LOCALMEAN{} if $\alphaopt > \frac{1}{2}$, and worse if $\alphaopt < \frac{1}{2}$.
\begin{theoremcorollary}
    \label{corollary:ese of 2alpha*}
    If $\varshort > 0$, then the \ESETEXT{} of \WAVG[2 \alphaopt] (w.r.t. \TRUEMEAN) is equal to that of \LOCALMEAN. Formally:
    \[
        \ese[2\alphaopt] = \ese[0]
    \]
\end{theoremcorollary}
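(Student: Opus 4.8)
The plan is to derive this directly from \cref{thm:ese of any alpha}, which already expresses \ESE{} in terms of \ESE[0], \ALPHA, and \ALPHAOPT{} whenever $\varshort > 0$. Since the hypothesis $\varshort > 0$ is precisely the condition guaranteeing that \ALPHAOPT{} is unique and positive, that theorem applies, and I may simply substitute $\alpha = 2\alphaopt$ into its formula.

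First I would write out the expression from \cref{thm:ese of any alpha} evaluated at $\alpha = 2\alphaopt$:
\[
    \ese[2\alphaopt] = \brackets{1 + 2\alphaopt \brackets{\frac{2\alphaopt}{\alphaopt} - 2}} \ese[0].
\]
The key observation is that the inner factor vanishes: because $\alphaopt > 0$ I may cancel to get $\frac{2\alphaopt}{\alphaopt} - 2 = 2 - 2 = 0$. Hence the bracketed coefficient collapses to $1$, and I conclude $\ese[2\alphaopt] = \ese[0]$, as claimed. This mirrors the computation behind \cref{corollary:ese of alpha*}, where the analogous inner factor evaluated to $-1$ instead of $0$.

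There is essentially no obstacle here: the substance of the result lies entirely in \cref{thm:ese of any alpha}, and this corollary is a one-line substitution. The only points requiring care are confirming that $\varshort > 0$ licenses the use of that theorem (which it does, being exactly the stated premise under which \ALPHAOPT{} is unique and positive), and noting that $\alphaopt \neq 0$ so that the cancellation $\frac{2\alphaopt}{\alphaopt} = 2$ is valid. Both follow immediately from the hypothesis, so the proof reduces to the displayed evaluation above.
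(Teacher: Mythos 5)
Your proof is correct and matches the paper's approach exactly: the paper likewise obtains this corollary by evaluating \cref{thm:ese of any alpha} at $\alpha = 2\alphaopt$, where the factor $\brackets{\frac{2\alphaopt}{\alphaopt} - 2}$ vanishes. Your added care about $\alphaopt > 0$ justifying the cancellation is a sensible, if minor, refinement.
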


\begin{theoremcorollary}
    \label{corollary:ese of 1}
    If $\varshort > 0$, then the \ESETEXT{} of \LOCALMEAN[Y] is given by:
    \[
        \ese[1] = \brackets{ \frac{1}{\alphaopt} - 1} \ese[0]
    \]
\end{theoremcorollary}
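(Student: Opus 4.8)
The plan is to evaluate \cref{thm:ese of any alpha} at the single weight $\alpha = 1$, since $\localmean[Y] = \wavg[1]$ by \cref{def:weighted average}. First I would check that the standing hypothesis $\varshort > 0$ is exactly the condition under which \cref{thm:ese of any alpha} applies: it guarantees (via \cref{thm:alpha_opt,corollary:alpha>0}) that \ALPHAOPT{} is unique and strictly positive, so that the reciprocal $1/\alphaopt$ appearing in the formula is well defined and finite. Substituting $\alpha = 1$ into $\ese = \brackets{1 + \alpha \brackets{\frac{\alpha}{\alphaopt} - 2}} \ese[0]$ then gives $\ese[1] = \brackets{1 + \frac{1}{\alphaopt} - 2} \ese[0]$, and collecting the constants $1 - 2 = -1$ yields the claimed identity $\ese[1] = \brackets{\frac{1}{\alphaopt} - 1} \ese[0]$.

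A second, even shorter route is available: this very identity already surfaces as an intermediate step inside the proof of \cref{thm:ese of any alpha}, where \ESE[1] is rewritten as $\ese[0] \brackets{\frac{1}{\alphaopt} - 1}$ precisely in order to eliminate \ESE[1] from the final expression. I would note that the corollary can therefore simply be read off from that computation, with no further algebra required. Either presentation is acceptable; the substitution into the finished theorem statement is the cleaner one to record.

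There is essentially no obstacle here, as the result is a one-line specialization of an already-proven formula. The only point demanding any care is the bookkeeping on the hypothesis: I must confirm that $\varshort > 0$ licenses the invocation of \cref{thm:ese of any alpha} (equivalently, that \ALPHAOPT{} is nonzero so that $1/\alphaopt$ does not blow up), which is exactly what the earlier results establish. Given that, the proof reduces to the arithmetic simplification above.
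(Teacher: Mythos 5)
Your proposal is correct and matches the paper's approach: the paper states that \cref{corollary:ese of 2alpha*} and \cref{corollary:ese of 1} are obtained by evaluating \cref{thm:ese of any alpha}, which is exactly your substitution of $\alpha = 1$. Your observation that the identity also appears as an intermediate step in the proof of \cref{thm:ese of any alpha}, and your check that $\varshort > 0$ guarantees $\alphaopt > 0$ so that $1/\alphaopt$ is well defined, are both sound.
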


As a counterpart to the global lower bound on \ALPHAOPT{} found in \cref{corollary:ese of alpha*}, we prove a global upper bound below:
\begin{theoremcorollary}
    \label{corollary:max ese}
    The \ESETEXT{} of \WAVG{} is bounded above by:
    \[
        \max_{\alpha \in \squareBrackets{0,1}} \ese
        = 
        \begin{cases}
            \ese[0], & \text{if $\varshort > 0$ and  $\alphaopt \geq \frac{1}{2}$} \\
            \ese[1], & \text{otherwise}
        \end{cases}
    \]
\end{theoremcorollary}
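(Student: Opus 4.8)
The plan is to sidestep the full continuous optimization over $\squareBrackets{0,1}$ by exploiting the convexity already established in \cref{lemma:formula for ese and convexity}. Since \ESE{} is convex in \ALPHA, and a convex function on a closed interval attains its maximum at one of the two endpoints, I immediately obtain $\max_{\alpha \in \squareBrackets{0,1}} \ese = \max\curlyBrackets{\ese[0], \ese[1]}$. The entire proof then reduces to determining the sign of $\ese[0] - \ese[1]$ and reading off which endpoint wins in each regime of the statement. Note that plain (not strict) convexity suffices here, so I need not invoke the extra hypothesis \CONDITIONS.

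First I would dispose of the degenerate branch $\varshort = 0$. By \cref{lemma:ese of local estimator}, this gives $\ese[0] = \scaledVar = 0$, and since the \ESETEXT{} is non-negative we have $\ese[1] \geq 0 = \ese[0]$, so the maximum is \ESE[1]. This lands correctly in the \guillemots{otherwise} branch (and remains consistent even when $\ese[1] = 0$ as well, since then the maximum is still the value \ESE[1]). For the main branch $\varshort > 0$, the weight \ALPHAOPT{} is unique and positive, so I can invoke \cref{corollary:ese of 1} to write $\ese[1] = \brackets{\tfrac{1}{\alphaopt} - 1} \ese[0]$. Because $\ese[0] > 0$ in this regime, the comparison $\ese[1] \leq \ese[0]$ divides through to $\tfrac{1}{\alphaopt} - 1 \leq 1$, i.e. $\alphaopt \geq \tfrac{1}{2}$. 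Hence the maximum equals \ESE[0] exactly when $\alphaopt \geq \tfrac{1}{2}$ and equals \ESE[1] when $\alphaopt < \tfrac{1}{2}$, matching both branches of the claimed formula.

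I do not expect a substantive obstacle: once convexity localizes the maximum to the endpoints, everything collapses to a single inequality fed by \cref{corollary:ese of 1}. The only point demanding mild care is the case bookkeeping—verifying that both $\varshort = 0$ and the subcase $\brackets{\varshort > 0,\ \alphaopt < \tfrac{1}{2}}$ are absorbed cleanly into the \guillemots{otherwise} branch, and that the boundary $\alphaopt = \tfrac{1}{2}$ (where $\ese[0] = \ese[1]$) is assigned consistently with the stated \guillemots{$\geq$}.
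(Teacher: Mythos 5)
Your proposal is correct and follows essentially the same route as the paper: convexity of \ESE{} (\cref{lemma:formula for ese and convexity}) localizes the maximum to the endpoints of $\squareBrackets{0,1}$, the degenerate case \VARSHORT$\,=0$ gives $\ese[0]=0\leq\ese[1]$, and the case \VARSHORT$\,>0$ reduces via \cref{corollary:ese of 1} to the equivalence $\ese[0]\geq\ese[1]\Longleftrightarrow\alphaopt\geq\tfrac{1}{2}$. No gaps; your extra remark that plain convexity suffices and your boundary bookkeeping at $\alphaopt=\tfrac{1}{2}$ are consistent with the paper's argument.
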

\begin{proof}[Proof of \cref{corollary:max ese}]
    By its convexity (\cref{lemma:formula for ese and convexity}), \ESE{} admits a maximum on the closed interval $\squareBrackets{0,1}$, specifically at one of the extreme points of the interval.
    Thus, we need only compare \ESE[0] and \ESE[1].
    \textbf{If $\varshort = 0$,} then $\ese[0] = 0 \leq \ese[1]$. This implies that $ \max_{\alpha \in \squareBrackets{0,1}} \ese = \ese[1]$.
    \textbf{Otherwise,} we use \cref{corollary:ese of 1} to find:
    \begin{equation*}
        \ese[0] \geq \ese[1]
        \Longleftrightarrow
        1 \geq \frac{1}{\alphaopt} - 1
        \Longleftrightarrow
        2 \alphaopt \geq 1
        \qedhere
    \end{equation*}
\end{proof}

In summary, \cref{thm:alpha_opt,corollary:alpha>0} show that there exists an optimal model averaging weight \ALPHAOPT{} and that it is unique and positive.
The effects of  model averaging on the \ESETEXT{} are quantified in
\cref{thm:ese of any alpha} and
its corollaries.

\section{Implications of our Findings}
\label{sec:Discussion}

Throughout this section, we make the assumption that $\varshort > 0$.
Indeed, if $X$ had zero variance, the local empirical mean \LOCALMEAN{} would neither need, nor permit, any further improvement.

We summarize our main results about the \ESETEXT{} of weighted model averaging in \cref{subsec:Discussion of ese results}, and those about the optimal weight of \LOCALMEAN[Y] in \cref{subsec:Discussion of alpha*}.
Then, we present numerical examples in \cref{subsec:Examples} to illustrate the various dependencies of \ALPHAOPT.
In \cref{subsec:relation to James-Stein estimator}, we show that model averaging generalizes \emph{shrinkage} estimators.
Finally, in \cref{subsec:equivalence with Donahue}, we discuss the similarities and differences between our results and those obtained by \citet{Donahue2020stable_coalitions}.

\subsection{Quality of the Weighted Model Average}
\label{subsec:Discussion of ese results}

We compute the \ESETEXT{} of the weighted average \WAVG, for any distributions \DISTRIBUTIONS{} and for any weight \ALPHA, in \cref{lemma:formula for ese and convexity}.
In \cref{thm:ese of any alpha}, we express it as a function of the weight \ALPHA, the optimal weight \ALPHAOPT, and \ESE[0] (the error of the local empirical mean \LOCALMEAN).
This dependency is illustrated in \cref{fig:ese of wavg}.

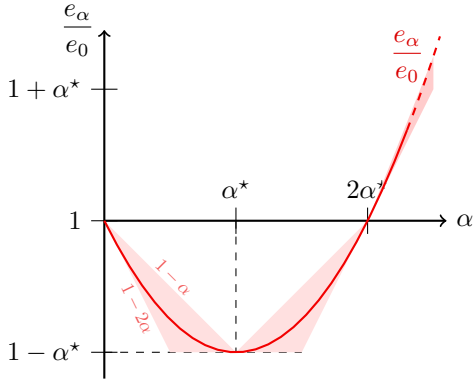
\begin{figure}[htb]
    \centering
    \begin{tikzpicture}[scale=1.75]
        \draw[->,thick] (0, 0.0) -- (2.6, 0.0) node[right] {\ALPHA};
        \draw[->,thick] (0,-1.2) -- (0.0, 1.45) node[left] {$\dfrac{\ese}{\ese[0]}$};
        
        \foreach \x/\xtext in {1/\alphaopt, 2/2\alphaopt}
            \draw[shift={(\x,0)}] (0,-0.1) -- (0,0.1) node[above] {$\xtext$};
        \foreach \y/\ytext in {-1/1-\alphaopt, 0/1, 1/1+\alphaopt}
            \draw[shift={(0,\y)}] (0.1,0) -- (-0.1,0) node[left] {$\ytext$};
        
        \draw[dashed] (0, -1) -- (1.5, -1);
        \draw[dashed] (1,0) -- (1,-1);
        
        \fill[red!12] (0, 0) -- (0.5,-1) -- (1,-1) -- cycle; 
        \fill[red!12] (1,-1) -- (1.5,-1) -- (2, 0) -- cycle; 
        \fill[red!20] (2, 0) -- (2.5, 1) -- (2.5, 1.25) -- cycle;
        
        \draw (0.3, -0.35) node[above right, rotate=-45, red!85!gray!70, scale=0.7] {$1-\alpha$}; 
        \draw (0.45,-0.85) node[below left,  rotate=-64, red!85!gray!70, scale=0.7] {$1-2\alpha$};  
            
        \draw[red!95!black,thick,dashed, domain=2.3:2.55] plot (\x, {\x*\x - 2*\x});
        \draw[red!95!black,thick, domain=0:2.3] plot (\x, {\x*\x - 2*\x});
        \draw (2.5, 1.25) node[left, red!85!black] {$\dfrac{\ese}{\ese[0]}$};
    \end{tikzpicture}
    \caption{
        Illustration of \cref{thm:ese of any alpha}.
        In red, we plot $\frac{\ese}{\ese[0]}$  depending on \ALPHA{} and \ALPHAOPT.
        The shading represents simple linear bounds. The left bounds are annotated with the corresponding formulae, and the right shape is symmetrical.
        \ESE{}: \ESETEXT{} of the weighted average \WAVG.
        \ESE[0]: \ESETEXT{} of the local estimator \LOCALMEAN.
    }
    \label{fig:ese of wavg}
\end{figure}
\label{sec of fig ese of wavg}

In \cref{corollary:ese of alpha*}, we conclude that \WAVG{} can have an \ESETEXT{} lower than that of \LOCALMEAN{} by up to a fraction \ALPHAOPT, and no lower.
However, the optimal weight \ALPHAOPT{} is a function of the (unknown) \MEANSANDVARS.
It is unclear whether it is possible to achieve the same \ESETEXT{} with a purely empirical model average, whose weight does not depend on unknown distribution parameters.

Nevertheless, even a sub-optimally weighted model average can significantly improve upon the local model \LOCALMEAN{} if \ALPHAOPT{} is not too small.
Indeed, \cref{fig:ese of wavg} illustrates that using \WAVG{} with $\alpha < \alphaopt$ reduces the error by more than a fraction of \ALPHA, and that any $0 < \alpha < 2\alphaopt$ has $\ese < \ese[0]$.
Specifically, all $0< \alpha < 1$ yield an improvement over $\alpha = 0$ if $\variance[\localmean] \geq \bias + \variance[\localmean[Y]]$ (because this implies $2\alphaopt \geq 1$).

What \cref{fig:ese of wavg} cannot show, however, is the tight upper bound to \ESE{} proven in \cref{corollary:max ese}.
For $\alphaopt \geq \frac{1}{2}$, \ESE{} is upper-bounded by \ESE[0].
Otherwise, it is upper-bounded by \ESE[1], which is related to \ESE[0] and \ALPHAOPT{} by \cref{corollary:ese of 1}.
Thus, this upper bound can be added to the figure for a given \ALPHAOPT{} by drawing a vertical line at $\alpha = 1$.
If $\alphaopt \geq \frac{1}{2}$, then the vertical line crosses $\frac{\ese}{\ese[0]}$ at a value below $1$.
\Cref{corollary:ese of 1} shows that the error can be increased substantially over that of \LOCALMEAN{}, especially if \ALPHAOPT{} is very close to $0$.

\subsection{Optimal Model Averaging Weight}
\label{subsec:Discussion of alpha*}
In \cref{subsec:Discussion of ese results}, we show that \ALPHAOPT{} is an upper bound to the potential benefits of weighted model averaging,
and that a small value of \ALPHAOPT{} opens up the possibility of doing worse than \LOCALMEAN.
This central role motivates us to visualize \ALPHAOPT{} and its dependency on the relevant parameters. In \cref{fig:alpha},
the contour lines help appreciate \cref{corollary:two upper bounds on alpha*}, which states that \ALPHAOPT{} is bounded above by the lower of the two terms:
$\left. \variance[\localmean] \right/ \bias$
(on the horizontal axis), and
$\left. \variance[\localmean] \right/ \variance[\localmean[Y]]$
(on the vertical axis).

\begin{figure}[tb]
    \centering
    \includegraphics{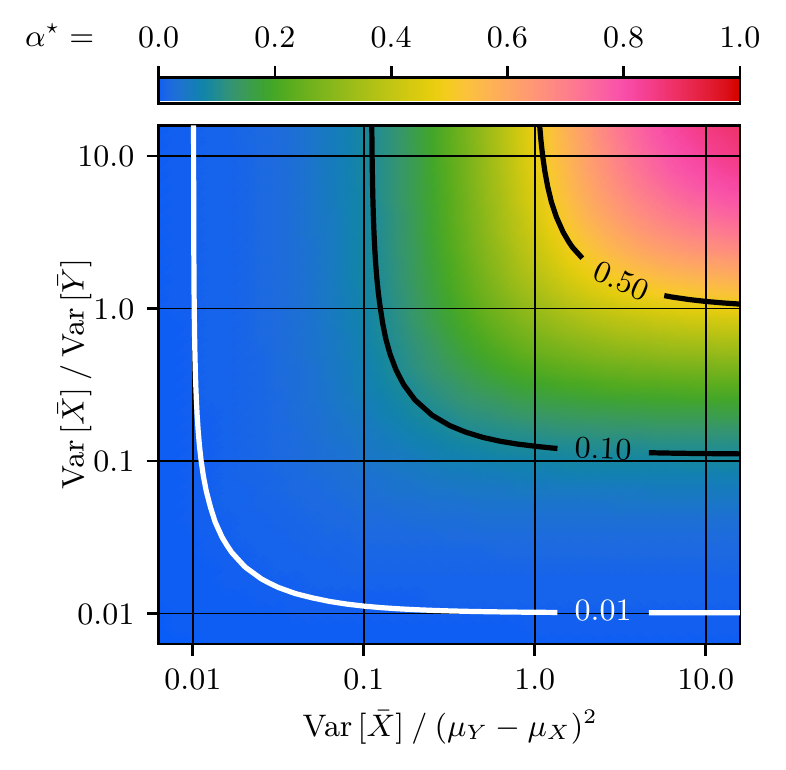}
    \caption{
        Illustration of \cref{thm:alpha_opt},
        showing the value of \ALPHAOPT{} depending on $\frac{\variance[\localmean]}{\bias}$ and $\frac{\variance[\localmean]}{\variance[\localmean[Y]]}$
        (cf. \cref{corollary:two upper bounds on alpha*}).
        The value of \ALPHAOPT{} is mapped to a color scale, and contour lines are drawn at $\alphaopt = 0.01$, $\alphaopt = 0.1$, and $\alphaopt = 0.5$.
    }
    \label{fig:alpha}
\end{figure}

Indeed, the contour lines are approximately horizontal when $\frac{\variance[\localmean[Y]]}{\bias} > 5$ (bottom right corner of \cref{fig:alpha}), and vertical when $\frac{\variance[\localmean[Y]]}{\bias} < \frac{1}{5}$ (top left corner).
What's more, the contour lines for $\alphaopt = 0.01$ and $\alphaopt = 0.1$ approximately asymptotically approach the $0.01$ and $0.1$-grid-lines respectively, as the distance from the diagonal increases.
Thus, the bound in \cref{corollary:two upper bounds on alpha*} is approximately tight as the relative difference between $\variance[\localmean[Y]]$ and $\squarediff{\truemean[Y]}{\truemean}$ increases.
This approximation is particularly good if the bound is much smaller than $1$, and falls apart of course when it is greater than $1$---as exemplified by the contour line at $\alphaopt = 0.5$.

As a consequence of this bound, increasing $n_Y$ yields diminishing returns: Doubling $n_Y$ almost doubles \ALPHAOPT{} when both $\variance[\localmean]$ and \BIAS{} are much smaller than $\variance[\localmean[Y]]$, but \ALPHAOPT{} is barely affected if one of these terms is much greater than $\variance[\localmean[Y]]$.
If the variance of \LOCALMEAN{} is much greater than that of \LOCALMEAN[Y], however, \ALPHAOPT{} may already be close to $1$.
In this case, a further increase of $n_Y$ may no longer produce a big change in \ALPHAOPT, but it may still reduce \OPTESE{} quite drastically by lowering the term $\brackets{1 - \alphaopt}$.

Importantly, both terms in \cref{corollary:two upper bounds on alpha*} feature the variance of \LOCALMEAN{} (which equals \ESE[0]).
Naturally, as the local estimator grows more accurate with increasing $n_X$, the need for collaboration vanishes along with its potential benefits.
Moreover, even for small $n_X$, model averaging is useful only if \VARSHORT{} is not too small compared to \BIAS{} and \VARSHORT[Y].

\subsection{Numerical Examples}
\label{subsec:Examples}
To give some intuition of its dependencies, we calculate \ALPHAOPT{} for a variety of scenarios in \cref{tab:examples}.
We also use the value of \ALPHAOPT{} to calculate the relative size of the error for three estimators, compared to the error of \LOCALMEAN:
\begin{description}[nosep,labelwidth=7mm]
    \item[\OPTAVG] the optimally weighted model average,
    \item[{\WAVG[1/5]}] the average with a weight of 20\% (i.e., $\tfrac{4}{5} \localmean + \tfrac{1}{5} \localmean[Y]$),
    \item[{\WAVG[1/2]}] the average with a weight of 50\% (i.e., $\tfrac{1}{2} \localmean + \tfrac{1}{2} \localmean[Y]$).
\end{description}

We describe each scenario by four easily interpretable quantities (\cref{tab:examples}, left of the vertical bar):
\begin{description}[nosep,labelwidth=7mm]
    \item[$\frac{\bias}{\varshort}$]
        the squared bias of $Y$ w.r.t. $X$, relative to \VARIANCE,
    \item[$n_X$]
        the number of samples drawn from $X$,
    \item[$\frac{\varshort[Y]}{\varshort}$] 
        the variance of $Y$, relative to that of $X$,
    \item[$\frac{n_Y}{n_X}$]
        the number of samples drawn from $Y$, relative to $n_X$.
\end{description}

However, several configurations of these four variables are equivalent because \ALPHAOPT{}
is fully determined by the terms $\frac{\bias}{\varshort} \cdot n_X$ and
$\big( \frac{\varshort[Y]}{\varshort} \big) \big/ \big( \frac{n_Y}{n_X} \big)$
(cf. \cref{eq:alpha* with only two terms}).
This is why the second, fourth, sixth, and twelfth ($n_X = 5$) rows of \cref{tab:examples} are equivalent to their respective predecessor.

The second row demonstrates that setting $n_Y \rightarrow \infty$ is equivalent to setting $\varshort[Y] \rightarrow 0$.
The fourth and sixth row show that the multiplication of $\varshort[Y]$ with a constant can be compensated by multiplying $n_Y$ with the same constant.
Furthermore, the third row of \cref{tab:examples} embodies the diminishing returns phenomenon described in \cref{subsec:Discussion of alpha*}: raising $n_Y$ until
$\big( \frac{n_Y}{n_X} \big) = 6 \big( \frac{\varshort[Y]}{\varshort} \big)$
already takes \ALPHAOPT{} most of the way towards its limit for $n_Y \rightarrow \infty$.
Finally, the eleventh and twelfth rows (from $n_X = 20$ to $n_X=5$) exemplify the connection between the first two columns, which is that \ALPHAOPT{} only depends on their product.
This also explains the equivalence of the last two rows of \cref{tab:examples}. It also reflects \cref{corollary:two upper bounds on alpha*}, as the influence of $n_Y$ is largest when $\frac{\bias}{\varshort} \cdot n_X$ is small, and becomes vanishingly small when the product exceeds $\sim10$ (see bottom rows).
Another interpretation is that, when $\truemean[Y] = \truemean \pm \sigma_X$ (i.e., $\bias = \varshort$), then using \TRUEMEAN[Y] as a proxy for \TRUEMEAN{} is exactly as good as using a single realization of $X$ (in terms of \ESETEXT).\footnote{
    Indeed, after drawing one sample $x_1 \sim X$, we find: $\eseformula[{\truemean[Y]}] = \squarediff{\truemean[Y]}{\truemean} = \varshort = \eseformula[x_1]$
}
\begin{table}[htb]
    \caption{
        Numerical examples for the value of \ALPHAOPT{} in a variety of scenarios.
        The corresponding errors of \OPTAVG, 
        \WAVG[1/5],
        and
        \WAVG[1/2]
        are  given as fractions (or multiples) of \ESE[0].
        Cells marked with $*$ could contain any finite positive value without affecting \ALPHAOPT,
        while empty cells have the same value as the cell above.
        $\operatorname{bias}^2$ denotes \BIAS{}.
    }
    \label{tab:examples}
    \vskip 0.15in
    \begin{center}
    \begin{small}
    \begin{sc}
    \begin{tabular}{cccc|lccc}
        \toprule
            $\frac{\operatorname{bias}^2}{\varshort}$
            & $n_X$
            & $\frac{\varshort[Y]}{\varshort}$
            & $\frac{n_Y}{n_X}$
            & ~\ALPHAOPT
            & $\frac{\optese}{\ese[0]}$
            & $\frac{\ese[1/5]}{\ese[0]}$
            & $\frac{\ese[1/2]}{\ese[0]}$
            \\
        \midrule
            0 & $*$ & 0   & $*$       & \textbf{1.0}  & 0.00 & 0.64 & 0.25 \\
                &   & $*$ & $+\infty$ & \textbf{1.0}  &      &      &      \\
            
                &   & 1   &  6       & \textbf{0.86} & 0.14 & 0.65 & 0.29 \\ 
                &   & 10  &  60      & \textbf{0.86} &      &      &      \\
            
                &   & 1   &  1        & \textbf{0.50} & 0.50 & 0.68 & 0.50 \\
                &   & 10  &  10       & \textbf{0.50} &      &      &      \\
             
            0.25 & 10 & 0 & $+\infty$ & \textbf{0.57} & 0.43 & 0.67 & 0.44 \\
                 &    & 1 & 1         & \textbf{0.44} & 0.56 & 0.69 & 0.56 \\
                 & 100& 0 & $+\infty$ & \textbf{0.04} & 0.96 & 1.64 & 6.50 \\
                 &    & 1 & 1         & \textbf{0.04} & 0.96 & 1.68 & 6.75 \\
            
               & 20 &  0  & $+\infty$ & \textbf{0.17} & 0.83 & 0.84 & 1.50 \\
            1  & 5  &  0  & $+\infty$ & \textbf{0.17} &      &      &      \\
               &    &  1  &  1        & \textbf{0.14} & 0.86 & 0.88 & 1.75 \\
               & 50 & 0   & $+\infty$ & \textbf{0.02} & 0.98 & 2.64 & 12.8 \\ 
               &    &  1  &  1        & \textbf{0.02} & 0.98 & 2.65 & 13.0  \\
            
            $*$ & $+\infty$ & $*$ & $*$ & \textbf{0.0} & 1.00 & $+\infty$ & $+\infty$ \\
            $+\infty$ & $*$ & $*$ & $*$ & \textbf{0.0} &      &           &           \\
        \bottomrule
    \end{tabular}
    \end{sc}
    \end{small}
    \end{center}
    \vskip -0.1in
\end{table}

\subsection{Relation to the James-Stein Estimator}
\label{subsec:relation to James-Stein estimator}

Weighted model averaging generalizes the fundamental statistical notion of \emph{shrinkage}. 
Conceptually, \emph{shrinkage} (towards $0$) simply corresponds to multiplying an empirical estimator with a weight $\beta < 1$.
This reduces the estimator's variance at the cost of additional bias, thus reducing its \ESETEXT{} if $\beta$ is sufficiently close to $1$ \cite{shrinkage_textbook}.
For $\varshort[Y] = 0$, we recover the case of estimator shrinkage towards an arbitrary anchor value (\TRUEMEAN[Y]).
If we pick $Y=0$, then \WAVG{} is simply a shrunken version of \LOCALMEAN: $\wavg = \brackets{1- \alpha } \localmean$.

We find that the optimal amount of shrinkage is never \emph{quite} $0$ for any combination of finite means, variances, and number of samples.
However, we see in \cref{tab:examples} that it quickly tends to $0$ as we shrink towards an increasingly unrelated value (with $\frac{\bias}{\varshort} \rightarrow \infty$), and as the quality of the local estimator \LOCALMEAN{} increases (with $n_X \rightarrow \infty$).

The first example of shrinkage is the infamous James-Stein (JS) estimator, which dominates the empirical mean for the problem of estimating from just one observation the mean of a spherically symmetrical multivariate normal random variable (RV) with three or more dimensions
\cite{shrinkage_textbook,james1961}.
While the JS estimator can be used without prior knowledge of the mean and variance of the RV, it only dominates the empirical mean under the assumptions that all coordinates of the RV (a) have the same variance, and that they are (b) normal and (c) mutually uncorrelated. 
It is also restricted to estimation in at least three dimensions.
By contrast, our results hold for arbitrary one-dimensional estimation problems, with minimal assumptions on $X$ and $Y$.
However, we express them as functions of statistics of $X$ and $Y$ that would not be available in a practical setting.

\subsection{Connection to Recent Related Work}
\label{subsec:equivalence with Donahue}

\citet{Donahue2020stable_coalitions} prove corresponding results to our \cref{thm:alpha_opt,corollary:alpha>0,corollary:ese of alpha*} in their Lemmas 6.1 and 6.3, under comparatively more complex assumptions.
Our model is simpler in three ways, making it more general for some aspects and more specific for others.

The most important difference is that our assumptions on the input data are minimal and significantly more realistic, as we assume nothing more than finite means and variances, and a non-zero variance for $X$.
In contrast, \citet{Donahue2020stable_coalitions} assume that  \TRUEMEAN[Y] and \TRUEMEAN{} are drawn independently from the same random variable with variance $\sigma^2$, and that $X$ and $Y$ have the same variance ($\varshort = \varshort[Y] = \mu_e$).

Secondly, we consider only two nodes ($M=2$), whereas their analysis includes an arbitrary number of players to study clustering approaches to model personalization. 
Nevertheless, our two-node model reproduces their \emph{coarse-grained federation} model (Section 6 in their paper) by setting:
\(\localmean[Y] = \frac{1}{N} \sum_{i=1}^M \hat{\theta}_i \cdot n_i\).
Finally, our analysis is confined to one-dimensional linear regression ($D=1$) compared to theirs in $D$ dimensions.

We will demonstrate that these corresponding results become equivalent in the special case when both respective classes of assumptions are applied jointly.

\begin{lemma}
    \label{lemma:equivalence with Donahue}
    By applying the assumptions of \citet{Donahue2020stable_coalitions} on the data generation process, to \cref{corollary:ese of alpha*}, we obtain the same result as by applying our assumption of $M=2$ nodes to the one-dimensional ($D=1$) version of their Lemma 6.3.
\end{lemma}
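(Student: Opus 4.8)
The plan is to evaluate both sides of the claimed equivalence explicitly in a shared notation and verify that they reduce to the same rational function. First I would expand the left-hand side: substituting the closed forms $\ese[0] = \scaledVar$ and $\ese[1] = \bias + \scaledVar[Y]$ from \cref{lemma:ese of local estimator} into \cref{corollary:ese of alpha*}, the statement $\optese = (1 - \alphaopt)\ese[0]$ together with $\alphaopt = \ese[0]/(\ese[0] + \ese[1])$ from \cref{thm:alpha_opt} becomes an explicit rational function of $\varshort$, $\varshort[Y]$, \BIAS, $n_X$, and $n_Y$.

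Next I would impose the data-generation assumptions of \citet{Donahue2020stable_coalitions}. Setting $\varshort = \varshort[Y] = \mu_e$ collapses the two variance terms, giving $\ese[0] = \mu_e/n_X$ and $\ese[1] = \bias + \mu_e/n_Y$. The crucial step is the bias term: since \TRUEMEAN{} and \TRUEMEAN[Y] are drawn independently from a common distribution of variance $\sigma^2$, the quantity \BIAS{} is itself random, and their error is taken in expectation over this draw. I would therefore replace \BIAS{} by its expectation over the means, $\mathbb{E}[(\truemean[Y] - \truemean)^2] = 2\sigma^2$ (the variance of the difference of two i.i.d.\ draws sharing a mean). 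This yields the specialized values $\alphaopt = \frac{\mu_e/n_X}{\mu_e/n_X + 2\sigma^2 + \mu_e/n_Y}$ and a matching closed form for \OPTESE.

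For the right-hand side I would take Donahue's Lemma~6.3, specialize it to $D=1$ (so that the per-coordinate quantities become scalars and the sum over dimensions disappears) and to $M=2$ players, identifying their coarse-grained federated model with our \LOCALMEAN[Y] through $\localmean[Y] = \frac{1}{N}\sum_{i=1}^M \hat{\theta}_i n_i$ as noted above. After translating their notation into ours — in particular their total sample count $N = n_X + n_Y$, their error metric, and their parameters $\mu_e$ and $\sigma^2$ — the resulting expression should reduce to the same rational function, completing the check.

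The main obstacle is reconciling the two notions of optimality rather than any hard algebra. Our \cref{corollary:ese of alpha*} treats the means and variances as fixed (if unknown) constants, whereas Donahue's result is already an expectation over the random draw of the means; the equivalence holds only after we average \BIAS{} over that draw and substitute $2\sigma^2$. The remaining difficulty is purely bookkeeping: transcribing Donahue's $D$-dimensional, $M$-player formula into our two-node scalar setting and confirming, term by term, that every constant lines up once both sets of assumptions are applied jointly.
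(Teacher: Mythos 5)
Your proposal matches the paper's own argument: the paper likewise splits the claim into two halves, applying Donahue's assumptions to \cref{corollary:ese of alpha*} (with the same key step of replacing \BIAS{} by its expectation $2\sigma^2$ over the random draw of the means) and applying the $M=2$, $D=1$ substitutions ($n_j = n_X$, $N = n_X + n_Y$, $\sum_{i\neq j} n_i^2 = n_Y^2$) to their Lemma 6.3, then checking both reduce to the same rational function. The approach and the identified crux (averaging the bias term over the mean-generating distribution to reconcile the two notions of error) are the same as in the paper.
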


\Cref{lemma:equivalence with Donahue} follows from \cref{lemma:my theorem plus their assumptions,lemma:their theorem plus my assumptions}.
The proof of \cref{lemma:my theorem plus their assumptions} is relegated to \cref{sec:lemma proofs for discussion}.

\begin{lemmacorollary}
    \label{lemma:my theorem plus their assumptions}
    Under the assumptions that (i)  \TRUEMEAN[Y] and \TRUEMEAN{} are drawn independently from the same random variable~$W$ with variance $\sigma^2$ and (ii) $\varshort = \varshort[Y] = \mu_e$, \cref{corollary:ese of alpha*,thm:alpha_opt} yield:\vspace{-1mm}
    \[  \optese
      = \frac{2n_Y^2\sigma^2 + \mu_e n_Y}{n_Y \brackets{n_Y + n_X}
      + 2n_X  n_Y^2\frac{\sigma^2}{\mu_e}}
    \]
\end{lemmacorollary}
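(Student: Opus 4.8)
The plan is to instantiate \cref{thm:alpha_opt} and \cref{corollary:ese of alpha*} under the two stated assumptions and then simplify algebraically. By \cref{lemma:ese of local estimator} we have $\ese[0] = \scaledVar$ and $\ese[1] = \bias + \scaledVar[Y]$; assumption (ii) collapses both variances to $\mu_e$, giving $\ese[0] = \frac{\mu_e}{n_X}$ and $\ese[1] = \bias + \frac{\mu_e}{n_Y}$. The one genuinely new ingredient, relative to the rest of the paper, is assumption (i): since \truemean{} and \truemean[Y] are now themselves random (drawn i.i.d.\ from $W$), the bias \bias{} is a random quantity, so the error must additionally be averaged over $W$.

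First I would average the conditional error over $W$. Conditionally on $\brackets{\truemean, \truemean[Y]}$, \cref{lemma:formula for ese and convexity} gives $\ese = \brackets{1-\alpha}^2 \ese[0] + \alpha^2 \ese[1]$. Because $\ese[0] = \frac{\mu_e}{n_X}$ does not depend on $W$, taking $\mathbb{E}_W$ touches only the bias, and by independence and the common variance $\sigma^2$, $\mathbb{E}_W\squareBrackets{\bias} = \operatorname{Var}\!\left[\truemean[Y] - \truemean\right] = 2\sigma^2$. Hence the $W$-averaged error retains exactly the same quadratic-in-\ALPHA{} form, with $\ese[1]$ replaced by $\tilde{e}_1 \triangleq 2\sigma^2 + \frac{\mu_e}{n_Y}$. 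Consequently \cref{thm:alpha_opt,corollary:ese of alpha*} apply unchanged to the averaged error, yielding $\alphaopt = \frac{\ese[0]}{\ese[0] + \tilde{e}_1}$ and $\optese = \brackets{1 - \alphaopt} \ese[0] = \frac{\ese[0]\,\tilde{e}_1}{\ese[0] + \tilde{e}_1}$.

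Finally I would substitute $\ese[0] = \frac{\mu_e}{n_X}$ and $\tilde{e}_1 = 2\sigma^2 + \frac{\mu_e}{n_Y}$ into $\frac{\ese[0]\,\tilde{e}_1}{\ese[0] + \tilde{e}_1}$. Multiplying numerator and denominator by $n_X n_Y$ clears the inner fractions and gives
\[
\optese = \frac{2\mu_e n_Y \sigma^2 + \mu_e^2}{\mu_e n_Y + \mu_e n_X + 2 n_X n_Y \sigma^2}.
\]
Dividing numerator and denominator by $\frac{\mu_e}{n_Y}$ then reproduces the claimed form $\frac{2 n_Y^2 \sigma^2 + \mu_e n_Y}{n_Y\brackets{n_Y + n_X} + 2 n_X n_Y^2 \frac{\sigma^2}{\mu_e}}$.

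I expect the main obstacle to be conceptual rather than computational: justifying that averaging the bias over $W$ preserves the quadratic structure in \ALPHA, so that the optimal-weight machinery of \cref{thm:alpha_opt} can be reused verbatim with $\ese[1]$ replaced by its $W$-expectation $\tilde{e}_1$. Once this reduction is secured, what remains is the routine algebra of matching two equivalent fractional forms, where the only care needed is spotting the factor $\frac{\mu_e}{n_Y}$ that accounts for the $\frac{\sigma^2}{\mu_e}$ appearing in the target denominator.
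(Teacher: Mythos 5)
Your proposal is correct and follows essentially the same route as the paper: both reduce to $\optese = \frac{\ese[0]\,\ese[1]}{\ese[0]+\ese[1]}$ via \cref{thm:alpha_opt} and \cref{corollary:ese of alpha*}, compute $\mathbb{E}_W\!\left[\squarediff{\truemean[Y]}{\truemean}\right] = 2\sigma^2$ from assumption (i), and finish with the same fraction-clearing algebra. Your explicit justification that averaging over $W$ is done \emph{before} optimizing over \ALPHA{} (preserving the quadratic form so the optimal-weight machinery applies verbatim) is a small clarification the paper leaves implicit, but it is the intended reading rather than a different argument.
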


\begin{lemmacorollary}
    \label{lemma:their theorem plus my assumptions}
    Under the assumption that $M=2$, denoting $n_j$ by $n_X$ and $n_{i \neq j}$ by $n_Y$, Lemma 6.3 from \citet{Donahue2020stable_coalitions} simplifies to:
    \[  MSE
      = \frac{2n_Y^2\sigma^2 + \mu_e n_Y}{n_Y \brackets{n_Y + n_X} + 2n_X n_Y^2\frac{\sigma^2}{\mu_e}}
    \]
\end{lemmacorollary}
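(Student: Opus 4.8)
The plan is to start from the general mean-squared-error expression of Donahue's Lemma~6.3, which gives the error incurred by player~$j$ when all $M$ players form the grand coalition under the coarse-grained federation scheme. That expression is a function of the sample sizes $n_1, \dots, n_M$ together with the two model parameters $\sigma^2$ (the variance of the distribution from which the player means $\truemean[i]$ are drawn) and $\mu_e$ (the common per-sample data variance, equal to both $\varshort$ and $\varshort[Y]$). Since this lemma claims only a \emph{simplification} of theirs under $M=2$, no new probabilistic content is required: the entire argument is a specialization followed by algebraic rearrangement.

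First I would transcribe their general formula verbatim, paying attention to the aggregate quantities in which it is phrased --- typically the total sample count $N = \sum_{i=1}^M n_i$ and one or more weighted sums of the $n_i$ arising from the pooled global estimator $\localmean[Y] = \frac{1}{N}\sum_{i=1}^M \hat{\theta}_i \cdot n_i$. Second, I would impose $M=2$, at which point every sum over players collapses to exactly two terms. Third, I would relabel according to the statement, writing $n_X$ for the count $n_j$ of the player of interest and $n_Y$ for the remaining count $n_{i \neq j}$; this turns the two-term sums into simple polynomials in $n_X$ and $n_Y$. The final step is routine algebra: clear the compound fractions, place numerator and denominator over a common denominator, and cancel the common factor of $n_Y$ so as to land on the claimed form
\[
    MSE = \frac{2 n_Y^2 \sigma^2 + \mu_e n_Y}{n_Y \brackets{n_Y + n_X} + 2 n_X n_Y^2 \frac{\sigma^2}{\mu_e}} .
\]

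The main obstacle is not conceptual but notational: Donahue's expression is written with their aggregates and their own bias/variance bookkeeping, since the deviation of the pooled estimator from $\truemean[j]$ contributes both a variance term scaling like $\mu_e/N$ and a between-player term scaling like $\sigma^2$. The care lies in building an accurate dictionary --- tracking which of their aggregates becomes $n_X$, which becomes $n_Y$, and which becomes $n_X + n_Y$ after specialization --- so that the cross terms combine correctly. Once that dictionary is fixed, the only remaining subtlety is matching the exact placement of the factor $\sigma^2/\mu_e$ in the denominator rather than producing an algebraically equivalent but differently grouped expression; this is resolved by a single pass of factoring $n_Y$ out of the denominator. Since the companion \cref{lemma:my theorem plus their assumptions} already derives the identical right-hand side from our own \cref{corollary:ese of alpha*}, a successful simplification here immediately yields \cref{lemma:equivalence with Donahue}.
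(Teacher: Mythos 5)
Your plan matches the paper's proof exactly: the paper's argument is nothing more than the substitution dictionary $n_j = n_X$, $N = n_X + n_Y$, $N - n_j = n_Y$, $\sum_{i\neq j} n_i^2 = n_Y^2$ applied to Donahue's Lemma 6.3, which is precisely the specialization-and-relabeling you describe. The approach is correct and essentially identical.
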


\begin{proof}
    \Cref{lemma:their theorem plus my assumptions} is obtained with the substitutions:
    \begin{align*}
        n_j &= n_X     
        &
        N   &= n_X + n_Y 
        \\
        \sum_{i\neq j} n_i^2 &= n_Y^2
        &
        N - n_j &= n_Y
        \qedhere
    \end{align*}
\end{proof}

\section{Conclusion}
\label{sec:Conclusion}

In this work, we have quantified the improvement of quality (reduction in \ESETEXTFULL{}) that can be achieved through weighted model averaging with peers, as compared to using exclusively local data.
Our results concern the mean estimation problem for any scalar random variable $X$, and include the case of shrinkage towards a fixed value.
While we limit our analysis to averaging between two models \LOCALMEANs, our results apply to federated learning by letting $a_Y$ denote the union of all other agents.
We derive the optimal averaging weight for the helper model~\LOCALMEAN[Y], proving that it is strictly positive unless $\variance =0$.
Through examples, we explain that this holds true even for an arbitrarily unsuitable helper model~\LOCALMEAN[Y] (thus for arbitrary non-IID client data), though the optimal weight converges towards $0$ as the difference between the expectations of $X$ and $Y$ increases.

Further, we show that model averaging with the optimal weight \ALPHAOPT{} reduces the \ESETEXT{} by a fraction equal to \ALPHAOPT, compared to the local model \LOCALMEAN.
We then analyse the \ESETEXT{} of the weighted model average, as a function of only the optimal weight \ALPHAOPT{} and the weight \ALPHA{} that is actually used.
We find that any weight $\alpha < 2\alphaopt$ reduces the \ESETEXT{} compared to \LOCALMEAN, by more than a fraction of \ALPHA{} if $\alpha < \alphaopt$, and that any weight of $\alpha > 2\alphaopt$ results in a larger \ESETEXT{}, compared to \LOCALMEAN.

Motivated by its central role, we visualize and investigate the dependencies of the optimal model averaging weight~\ALPHAOPT.
It depends mainly on the lower of two ratios, which both involve the variance of the local estimator, \VARIANCE[\localmean].
Thus, \ALPHAOPT{} 
can only be large if \VARIANCE[\localmean] is significantly larger than the variance of the helper model \LOCALMEAN[Y] and the squared difference between the expectations of $X$ and $Y$.
\paragraph{Outlook.}
We prove theorems under realistic assumptions about the distributions of $X$ and $Y$, in the limited setting of one-dimensional parameter estimation with two agents.
Future work could extend our results to multivariate parameter estimation and to an arbitrary number of agents.
Further, in addition to interpreting it \emph{as} shrinkage, we could apply shrinkage \emph{to} the weighted model average, thus obtaining $\mu_{\beta, \gamma} = \brackets{1-\beta}\localmean + \gamma \localmean[Y]$, where $ \beta \geq \gamma $.

Finally, whether our theoretical lower bound on the \ESETEXT{} of the weighted model average can possibly be achieved without perfect knowledge of the bias and variances of $X$ and $Y$, merits investigation.
It would thus be interesting to quantify the error reduction that can be achieved in practice, potentially under specific assumptions on $X$ and $Y$.

\newpage %before refs

\bibliography{ms}

\begin{thebibliography}{15}
\providecommand{\natexlab}[1]{#1}
\providecommand{\url}[1]{\texttt{#1}}
\expandafter\ifx\csname urlstyle\endcsname\relax
  \providecommand{\doi}[1]{doi: #1}\else
  \providecommand{\doi}{doi: \begingroup \urlstyle{rm}\Url}\fi

\bibitem[Deng et~al.(2020)Deng, Kamani, and
  Mahdavi]{Adaptive_Personalized_FL_2020}
Deng, Y., Kamani, M.~M., and Mahdavi, M.
\newblock Adaptive {Personalized} {Federated} {Learning}.
\newblock \emph{arXiv:2003.13461 [cs, stat]}, March 2020.
\newblock URL \url{http://arxiv.org/abs/2003.13461}.

\bibitem[Donahue \& Kleinberg(2020)Donahue and
  Kleinberg]{Donahue2020stable_coalitions}
Donahue, K. and Kleinberg, J.
\newblock Model-sharing games: Analyzing federated learning under voluntary
  participation.
\newblock \emph{arXiv preprint arXiv:2010.00753}, 2020.
\newblock URL \url{http://arxiv.org/abs/2010.00753}.

\bibitem[Fallah et~al.(2020)Fallah, Mokhtari, and
  Ozdaglar]{fallah2020personalized}
Fallah, A., Mokhtari, A., and Ozdaglar, A.
\newblock Personalized federated learning: A meta-learning approach.
\newblock \emph{arXiv preprint arXiv:2002.07948}, 2020.

\bibitem[Grimberg et~al.(2020)Grimberg, Hartley, Jaggi, and
  Karimireddy]{grimberg2020weight}
Grimberg, F., Hartley, M.-A., Jaggi, M., and Karimireddy, S.~P.
\newblock Weight erosion: An update aggregation scheme for personalized
  collaborative machine learning.
\newblock In \emph{Domain Adaptation and Representation Transfer, and
  Distributed and Collaborative Learning}, pp.\  160--169. Springer, 2020.

\bibitem[Gruber(2017)]{shrinkage_textbook}
Gruber, M.
\newblock \emph{Improving Efficiency by Shrinkage: The James--Stein and Ridge
  Regression Estimators}.
\newblock Routledge, 2017.

\bibitem[James \& Stein(1961)James and Stein]{james1961}
James, W. and Stein, C.
\newblock Estimation with quadratic loss.
\newblock In \emph{Proceedings of the Fourth Berkeley Symposium on Mathematical
  Statistics and Probability, Volume 1: Contributions to the Theory of
  Statistics}, pp.\  361--379, Berkeley, Calif., 1961. University of California
  Press.
\newblock URL \url{https://projecteuclid.org/euclid.bsmsp/1200512173}.

\bibitem[Kairouz et~al.(2019)Kairouz, McMahan, Avent, Bellet, Bennis, Bhagoji,
  Bonawitz, Charles, Cormode, Cummings, D'Oliveira, Rouayheb, Evans, Gardner,
  Garrett, Gascón, Ghazi, Gibbons, Gruteser, Harchaoui, He, He, Huo,
  Hutchinson, Hsu, Jaggi, Javidi, Joshi, Khodak, Konečný, Korolova,
  Koushanfar, Koyejo, Lepoint, Liu, Mittal, Mohri, Nock, Özgür, Pagh,
  Raykova, Qi, Ramage, Raskar, Song, Song, Stich, Sun, Suresh, Tramèr,
  Vepakomma, Wang, Xiong, Xu, Yang, Yu, Yu, and Zhao]{kairouz_advances_FL_2019}
Kairouz, P., McMahan, H.~B., Avent, B., Bellet, A., Bennis, M., Bhagoji, A.~N.,
  Bonawitz, K., Charles, Z., Cormode, G., Cummings, R., D'Oliveira, R. G.~L.,
  Rouayheb, S.~E., Evans, D., Gardner, J., Garrett, Z., Gascón, A., Ghazi, B.,
  Gibbons, P.~B., Gruteser, M., Harchaoui, Z., He, C., He, L., Huo, Z.,
  Hutchinson, B., Hsu, J., Jaggi, M., Javidi, T., Joshi, G., Khodak, M.,
  Konečný, J., Korolova, A., Koushanfar, F., Koyejo, S., Lepoint, T., Liu,
  Y., Mittal, P., Mohri, M., Nock, R., Özgür, A., Pagh, R., Raykova, M., Qi,
  H., Ramage, D., Raskar, R., Song, D., Song, W., Stich, S.~U., Sun, Z.,
  Suresh, A.~T., Tramèr, F., Vepakomma, P., Wang, J., Xiong, L., Xu, Z., Yang,
  Q., Yu, F.~X., Yu, H., and Zhao, S.
\newblock Advances and {Open} {Problems} in {Federated} {Learning}.
\newblock \emph{arXiv:1912.04977 [cs, stat]}, December 2019.
\newblock URL \url{http://arxiv.org/abs/1912.04977}.

\bibitem[Konečný et~al.(2016)Konečný, McMahan, Ramage, and
  Richtárik]{konecny2016federated_first}
Konečný, J., McMahan, H.~B., Ramage, D., and Richtárik, P.
\newblock Federated {Optimization}: {Distributed} {Machine} {Learning} for
  {On}-{Device} {Intelligence}.
\newblock \emph{arXiv:1610.02527 [cs]}, October 2016.
\newblock URL \url{http://arxiv.org/abs/1610.02527}.

\bibitem[Li et~al.(2020)Li, Sahu, Talwalkar, and Smith]{FL_overview_2020}
Li, T., Sahu, A.~K., Talwalkar, A., and Smith, V.
\newblock Federated {Learning}: {Challenges}, {Methods}, and {Future}
  {Directions}.
\newblock \emph{IEEE Signal Processing Magazine}, 37\penalty0 (3):\penalty0
  50--60, May 2020.
\newblock ISSN 1558-0792.
\newblock \doi{10.1109/MSP.2020.2975749}.
\newblock Conference Name: IEEE Signal Processing Magazine.

\bibitem[Lin et~al.(2020)Lin, Kong, Stich, and Jaggi]{lin2020ensemble}
Lin, T., Kong, L., Stich, S.~U., and Jaggi, M.
\newblock Ensemble distillation for robust model fusion in federated learning.
\newblock \emph{NeurIPS - Advances in Neural Information Processing Systems},
  33, 2020.

\bibitem[Mansour et~al.(2020)Mansour, Mohri, Ro, and
  Suresh]{mansour_three_approaches_personalized_FL_2020}
Mansour, Y., Mohri, M., Ro, J., and Suresh, A.~T.
\newblock Three {Approaches} for {Personalization} with {Applications} to
  {Federated} {Learning}.
\newblock \emph{arXiv:2002.10619 [cs, stat]}, February 2020.
\newblock URL \url{http://arxiv.org/abs/2002.10619}.

\bibitem[Nedic(2020)]{nedic2020distributed}
Nedic, A.
\newblock Distributed gradient methods for convex machine learning problems in
  networks: Distributed optimization.
\newblock \emph{IEEE Signal Processing Magazine}, 37\penalty0 (3):\penalty0
  92--101, 2020.

\bibitem[Singh \& Jaggi(2020)Singh and Jaggi]{singh2020model}
Singh, S.~P. and Jaggi, M.
\newblock Model fusion via optimal transport.
\newblock \emph{NeurIPS - Advances in Neural Information Processing Systems},
  33, 2020.

\bibitem[Su et~al.(2020)Su, Dimakopoulou, Krishnamurthy, and
  Dudik]{2020shrinkage}
Su, Y., Dimakopoulou, M., Krishnamurthy, A., and Dudik, M.
\newblock Doubly robust off-policy evaluation with shrinkage.
\newblock In III, H.~D. and Singh, A. (eds.), \emph{Proceedings of the 37th
  International Conference on Machine Learning}, volume 119 of
  \emph{Proceedings of Machine Learning Research}, pp.\  9167--9176, Virtual,
  13--18 Jul 2020. PMLR.
\newblock URL \url{http://proceedings.mlr.press/v119/su20a.html}.

\bibitem[Wang et~al.(2019)Wang, Mathews, Kiddon, Eichner, Beaufays, and
  Ramage]{local_fine_tuning_2019}
Wang, K., Mathews, R., Kiddon, C., Eichner, H., Beaufays, F., and Ramage, D.
\newblock Federated evaluation of on-device personalization.
\newblock \emph{arXiv preprint arXiv:1910.10252}, 2019.
\newblock URL \url{http://arxiv.org/abs/1910.10252}.

\end{thebibliography}
\bibliographystyle{custom}

\newpage

\twocolumn[
\icmltitle{Optimal Weighted~Model~Averaging for Personalized Collaborative~Learning
            Appendix: Lemma~Proofs}
]

% % % % % % % % % % % %
\appendix
\section{Lemmas in \cref{sec:theoretical results}}
\label{sec:lemma proofs for theoretical results}
% % % % % % % % % % % %

\begin{proof}[Proof of \cref{lemma:expectation and variance of empirical mean}]
    The proofs for \LOCALMEAN{} apply analogously to \LOCALMEAN[Y]. 
    The unbiasedness of the empirical mean ($\expectation[\localmean] = \expectation$) follows from the linearity property of the expectation.
    The variance of \LOCALMEAN{} follows from the independence of the realizations \SAMPLES:
    \begin{multline*}
        \variance[\localmean] = \variance[\frac{1}{n_X} \sum_{i=1}^{n_X} x_i] = \frac{1}{n_X^2} \variance[\sum_{i=1}^{n_X} x_i] \\ 
        = \frac{1}{n_X^2} \sum_{i=1}^{n_X} \variance[ x_i] + \sum_{j \neq i}^{n_X} \underbrace{Cov\left(x_i, x_j\right)}_{=0 \ \left( x_i \perp  x_j\right)} = \frac{n_X \variance}{n_X^2} 
        \qedhere
    \end{multline*}
\end{proof}

\begin{proof}[Proof of \cref{lemma:expectation and variance of weighted average}]
    The expectation of \WAVG{} follows from the linearity property of the expectation.
    Its variance also results from the properties of the variance:
    \begin{multline*}
        \variance[\wavg] = \variance[\brackets{1-\alpha} \localmean + \alpha \localmean[Y]] \\
        = \brackets{1-\alpha}^2 \variance[\localmean] + \alpha^2 \variance[\localmean[Y]]
    \end{multline*}
    The result follows from \cref{lemma:expectation and variance of empirical mean}.
\end{proof}

\begin{proof}[Proof of \cref{lemma:ese of local estimator}]
    To compute the \ESETEXT{} of the empirical means,
    we make use of the bias-variance decomposition:
    \begin{align}
        \label{eq:bias-variance decomposition}
        && \expectation[{\squarediff{\hat \theta}{\theta}}]
        = \mathbb{E} \big[\hat{\theta}^2 &- 2 \theta \hat \theta + \theta^2 \big]
        \nonumber \\
        &=& \underbrace{
            \mathbb{E}\big[\hat{\theta}^2\big]
            \lefteqn{\overbrace{\phantom{
                - \left( \mathbb{E}\big[\hat{\theta}\big]\right)^2 + \left( \mathbb{E}\big[\hat{\theta}\big]\right)^2
            }}^{\pm 0}}
            - \left( \mathbb{E}\big[\hat{\theta}\big]\right)^2
        }_{}
        &+ \underbrace{
            \left( \mathbb{E}\big[\hat{\theta}\big] \right)^2
            - 2 \theta \mathbb{E}\big[\hat{\theta}\big]
            + \theta^2
        }_{} 
        \nonumber \\
        &=& Var\big[\hat \theta \big] &+ \squarediff{ \mathbb{E}\big[\hat{\theta}\big] }{\theta}
    \end{align}
    Plugging in $\truemean$, \LOCALMEANs, we obtain:
    \begin{align*}
        \eseformula[\localmean]
        &= \variance[\localmean] + \overbrace{\squarediff{\expectation[\localmean]}{\truemean}}^{0}
        \\
        \eseformula[\localmean[Y]]
        &= \variance[\localmean[Y]] + \squarediff{\expectation[\localmean[Y]]}{\truemean} & \hspace{6mm}
    \end{align*}
    The result follows from \cref{lemma:expectation and variance of empirical mean}.
\end{proof}

\begin{proof}[Proof of \cref{lemma:formula for ese and convexity}]
    We derive our analytic formula for \ALPHAOPT{} by plugging the results from \cref{lemma:expectation and variance of weighted average} into the bias-variance decomposition (\cref{eq:bias-variance decomposition}):
    \begin{multline*}
        \ese =
        \eseformula = \variance[\wavg] + \squarediff{\expectation[\wavg]}{\truemean} \\
        = \brackets{1-\alpha}^2 \scaledVar + \alpha^2 \scaledVar[Y] + \left(\alpha (\truemean[Y] - \truemean) \right)^2
    \end{multline*}
    Using \cref{lemma:ese of local estimator}, we obtain:
    \begin{equation*}
        \ese =  \brackets{1-\alpha}^2 \ese[0] + \alpha^2 \ese[1]
    \end{equation*}
    We prove the convexity results by taking the second derivative of \ESE{} w.r.t. $\alpha$. We obtain the first derivative trivially as:
    \begin{equation*}
        \dd \ese = 
        2 \left(\alpha - 1\right) \ese[0] + 2 \alpha \ese[1]
    \end{equation*}
    The second derivative of the error w.r.t. $\alpha$ follows as:
    \begin{equation*}
        \ddd{2} \ese =
        2 \brackets{ \ese[0] + \ese[1]}
        \geq 0
    \end{equation*}
    We conclude the proof by observing that the second derivative of the error w.r.t. $\alpha$ is a sum of non-negative terms.
    Thus, it is itself non-negative. By \cref{lemma:ese of local estimator}, it is positive if \CONDITIONS.
\end{proof}

\section{Lemmas in \cref{sec:Discussion}}
\label{sec:lemma proofs for discussion}

\begin{proof}[Proof of \cref{lemma:my theorem plus their assumptions}]
    We plug \cref{thm:alpha_opt} into \cref{corollary:ese of alpha*} to obtain:
    \begin{equation*}
        \optese
        = \left(1 - \alphaopt \right) \ese[0]
        = \frac{\ese[1]}{\ese[0] + \ese[1]} \ese[0]
    \end{equation*}
    Using \cref{lemma:ese of local estimator}, we obtain:
    \begin{equation*}
        \optese
        = \tfrac{\expectation[\squarediff{\truemean[Y] }{\truemean }]  + \scaledVar[Y] }{ \scaledVar + \expectation[\squarediff{\truemean[Y] }{\truemean }]  + \scaledVar[Y] } \scaledVar
    \end{equation*}
    Using assumption (ii):
    \begin{equation}
        \label{eq:after assumption ii}
        \optese
        = \tfrac{\expectation[\squarediff{\truemean[Y] }{\truemean }]  + \frac{\mu_e}{n_Y} }{ \frac{\mu_e}{n_X} + \expectation[\squarediff{\truemean[Y] }{\truemean }]  +  \frac{\mu_e}{n_Y} }  \frac{\mu_e}{n_X}
    \end{equation}
    Under assumption (i), we observe:
    \begin{multline*}
        \expectation[ \squarediff{\truemean[Y] }{\truemean } ] 
        = \variance[ { \truemean[Y] - \truemean } ] + \underbrace{ {\expectation[ { \truemean[Y] - \truemean } ]}^2 }_{ =0 } \\
        = \variance[ {\truemean[Y]} ] + \variance[ {\truemean } ]
        = 2 \variance[W] 
        = 2 \sigma^2 
    \end{multline*}
    Substituting in \cref{eq:after assumption ii} yields:
    \begin{multline*}
        \optese
        = \frac{2\sigma^2 + \frac{\mu_e}{n_Y}}{\frac{\mu_e}{n_X} + 2\sigma^2 + \frac{\mu_e}{n_Y}} \frac{\mu_e}{n_X} \\
        = \frac{2\sigma^2 + \frac{\mu_e}{n_Y}}{1 + 2n_X\frac{\sigma^2}{\mu_e} + \frac{n_X}{n_Y}} \\
        = \frac{2n_Y^2\sigma^2 + \mu_e n_Y}{n_Y^2 + n_Y n_X + 2n_X n_Y^2\frac{\sigma^2}{\mu_e}}
    \end{multline*}
\end{proof}
\end{document}